\documentclass[12pt]{article}

\usepackage{a4wide}
\usepackage[latin1]{inputenc}
\usepackage{amsfonts}
\usepackage{amsmath}
\usepackage{amssymb}
\usepackage{amsthm}
\usepackage{caption}
\usepackage{fontenc}
\usepackage{epic}
\usepackage{float}
\usepackage{verbatim}
\usepackage{hyperref}
\usepackage{tikz}

\newenvironment{notation}{\vspace{0.3cm} \noindent {\bf Notation:\ }}{\vspace{0.3cm}}
\newenvironment{example}{\begin{exmp} \rm }{\hfill $\Box$ \end{exmp}}
\newtheorem{theorem}{Theorem}[section]
\newtheorem{definition}[theorem]{Definition}
\newtheorem{proposition}[theorem]{Proposition}
\newtheorem{lemma}[theorem]{Lemma}
\newtheorem{corollary}[theorem]{Corollary}
\newtheorem{exmp}[theorem]{Example}
\newtheorem{conjecture}[theorem]{Conjecture}

\floatstyle{ruled}
\newfloat{pattern}{h}{loa}
\floatname{pattern}{Pattern}

\DeclareMathOperator{\occursin}{\rightarrow}
\DeclareMathOperator{\notoccursin}{\not\rightarrow}
\newcommand{\CSP}[1]{\ensuremath{\operatorname{CSP}(#1)}}
\newcommand{\cspneg}[1]{\ensuremath{\CSP{\overline{#1}}}}
\newcommand{\unstructured}{flat}

\newcommand{\unknown}{U}
\newcommand{\var}[1]{v_{#1}}
\newcommand{\varb}[1]{w_{#1}}
\newcommand{\varc}[1]{x_{#1}}

\newcommand{\tuple}[1]{\ensuremath{\left\langle {#1} \right\rangle}}
\newcommand{\dotcup}{\ensuremath{\mathaccent\cdot\cup}}
\newcommand{\defterm}[1]{\textbf{#1}}
\newcommand{\tree}{{\text{\sc Tree}}}
\newcommand{\maxbin}{\text{\sc max2}}
\newcommand{\btp}{\text{\sc btp}}
\newcommand{\alldiff}{\text{\sc AllDifferent}}
\newcommand{\unary}{\text{\sc unary}}
\newcommand{\negtrans}{\text{\sc Negtrans}}
\newcommand{\SepPivot}{\textnormal{{\sc SepPivot}}}
\newcommand{\Pivot}{\textnormal{{\sc Pivot}}}
\newcommand{\Cycle}{\text{\sc Cycle}}
\newcommand{\Valency}{\text{\sc Valency}}
\newcommand{\Path}{\text{\sc Path}}

\title{
The tractability of CSP classes defined by forbidden patterns}
\author{David A. Cohen \\
  \small Royal Holloway, University of London,\\
  \small \texttt{dave@cs.rhul.ac.uk}\\
  \and
  Martin C. Cooper \\
  \small IRIT, University of Toulouse III, \\
  \small \texttt{cooper@irit.fr}\\
  \and
  P\'aid\'i Creed \\
  \small Royal Holloway, University of London,\\
  \small \texttt{paidi@cs.rhul.ac.uk}\\
  \and
  Andr\'as Z. Salamon \\
  \small University of Edinburgh,\\
  \small \texttt{asalamon@inf.ed.ac.uk}\\
}
\date{\today}

\begin{document} 

\maketitle

\begin{abstract}
The constraint satisfaction problem (CSP) is a general problem
central to computer science and artificial intelligence. Although the
CSP is NP-hard in general, considerable effort has been spent on
identifying tractable subclasses. The main two approaches consider
{\em structural} properties (restrictions on the hypergraph of
constraint scopes) and {\em relational} properties (restrictions on
the language of constraint relations). Recently, some authors have
considered {\em hybrid} properties that restrict the constraint
hypergraph and the relations simultaneously.

Our key contribution is the novel concept of a {\em CSP pattern} and
classes of problems defined by {\em forbidden patterns} (which can be
viewed as forbidding generic sub-problems). We describe the
theoretical framework which can be used to reason about classes of
problems defined by forbidden patterns. We show that this framework
generalises relational properties and allows us to capture known
hybrid tractable classes.

Although we are not close to obtaining a dichotomy concerning the
tractability of general forbidden patterns, we are able to make some
progress in a special case: classes of problems that arise when we
can only forbid binary negative patterns (generic sub-problems in
which only inconsistent tuples are specified).  In this case we are
able to characterise very large classes of tractable and NP-hard
forbidden patterns.  This leaves the complexity of just one case
unresolved and we conjecture that this last case is tractable.

\end{abstract}

{\small \textbf{Keywords}: Constraint satisfaction problem,
  tractability, forbidden substructures.}

\section{Introduction}
In the constraint satisfaction paradigm we consider computational
problems in which we have to assign values (from a \textit{domain})
to \textit{variables}, under some \textit{constraints}.  Each
constraint limits the (simultaneous) values that a list of variables
(its \textit{scope}) can be assigned.  In a typical situation some
pair of variables might represent the starting times of two jobs in a
machine shop scheduling problem.  A reasonable constraint would
require a minimum time gap between the values assigned to these two
variables.

Constraint satisfaction has proved to be a useful modelling tool in a
variety of contexts, such as scheduling, timetabling, planning,
bio-informatics and computer vision. This has led to the development
of a number of successful constraint solvers. Unfortunately, solving
general constraint satisfaction problem (CSP) instances is NP-hard
and so there has been significant research effort into finding
tractable fragments of the CSP.

In principle we can stratify the CSP in two quite distinct and
natural ways. The structure of the constraint scopes of an instance
of the CSP can be thought of as a hypergraph where the variables are
the vertices, or more generally as a relational structure.  We can
find tractable classes by restricting this relational structure,
while allowing arbitrary constraints on the resulting scopes
\cite{Dechter1987:network}. Sub-problems of the general constraint
problem obtained by such restrictions are called structural.
Alternatively, the set of allowed assignments to the variables in the
scope can be seen as a relation. We can choose to allow only
specified kinds of constraint relations, but allow these to interact
in an arbitrary structure \cite{Jeavons1997:closure}. Such restrictions
are called relational or language-based.

Structural subclasses are defined by specifying a set of hypergraphs
(or relational structures) which are the allowed structures for CSP
instances.
It has been shown that tractable structural classes are characterised
by limiting appropriate (structural) width
measures
\cite{Dechter1989:tree,Freuder1990:complexity,Gyssens1994:decomposing,Gottlob2002:hypertree,Marx2010:tractable}. 
For example, a  tractable structural class of binary CSPs is obtained
whenever we restrict the constraint structure (which is a graph in
this case)  to have bounded tree
width~\cite{Dechter1989:tree,Freuder1990:complexity}. In fact, it has been
shown that, subject to certain complexity-theoretic assumptions, the
only structures which give rise to tractable CSPs are those with
bounded (hyper-)tree
width~\cite{Dalmau2002:constraint,Grohe2006:structure,Grohe2007:complexity,Marx2010:tractable}.  

Relational subclasses are defined by specifying a set of constraint
relations.  It has been shown that the complexity of the subclass
arising from any such restriction is precisely determined by the so
called {\em polymorphisms} of the set of
relations~\cite{Bulatov2005:classifying,Cohen2006:complexity}\nocite{Rossi2006:handbook}.
The polymorphisms specify that, whenever some set of tuples is in a
constraint relation, then it cannot be the case that a particular
tuple (the result of applying the polymorphism) is not in the
constraint relation.  It is thus the relationship between allowed
tuples and disallowed tuples inside the constraint relations that is
of key importance to the relational tractability of any given class
of instances.
Whilst a general dichotomy has not yet been proven for the relational
case, many dichotomies on sub-problems have been obtained,
e.g.~\cite{Bulatov2003:tractable,Bulatov2005:classifying,Bulatov2006:dichotomy}.  

Unfortunately, by allowing only structural or relational restrictions
we limit the possible subclasses that we can define. By
considering restrictions on both the structure of the constraint graph
and the relations, we are able to identify new tractable classes. We call
these restrictions {\em hybrid} reasons for tractability.

Several hybrid results have been published for binary
CSPs~\cite{Jegou1993:decomposition,Cohen2003:new,Salamon2008:perfect:single,Cooper2010:generalizing,Cooper2010:hybrid}.
Instead of looking at the constraint graph or the constraint
language, these works captured tractability based upon the properties
of the (coloured) microstructure of CSP instances. The {\em
microstructure} of a binary CSP instance is the graph $\tuple{V,E}$
where $V$ is the set of possible assignments of values to variables
and $E$ is the set of pairs of mutually consistent variable-value
assignments~\cite{Jegou1993:decomposition}. In the {\em coloured
microstructure}, the vertices representing an assignment to variable
$v_i$ are labelled by a colour representing variable $v_i$, thus
maintaining the distinction between assignments to different
variables.

The coloured microstructure of a CSP instance captures both the
structure and the relations of a CSP instance and so it is a natural
place to look for tractable classes which are neither purely
structural nor purely relational.

Of the results on (coloured) microstructure properties, two are of
particular note. First it was observed that the class of instances
with a perfect microstructure is
tractable~\cite{Salamon2008:perfect:single}.
This is a proper generalisation of the well known hybrid tractable
CSP class whose instances allow arbitrary unary constraints and in
which every pair of variables is constrained to be not
equal~\cite{Regin1994:filtering,vanHoeve2001:alldifferent}, and of the hybrid
class whose microstructure is
triangulated~\cite{Jegou1993:decomposition,Cohen2003:new}.  
The perfect microstructure property excludes an infinite set of
induced subgraphs from the microstructure. In this paper, we provide
a different hybrid class that also strictly generalises the class of
CSP instances with an inequality constraint between every pair of
variables and an arbitrary set of unary constraints, but does so by
forbidding a single pattern.

Secondly, the so called broken-triangle property properly extends the
structural notion of acyclicity to a more interesting hybrid
class~\cite{Cooper2010:generalizing}. The broken triangle property is specified
by excluding a particular pattern (a subgraph) in the coloured
microstructure. It is this property that we generalise in this paper.
We do this by working directly with the CSP instance (or equivalently
its coloured microstructure) rather than its microstructure
abstraction which is a simple graph. This allows us to introduce a
language for expressing hybrid classes in terms of forbidden
patterns, so obtaining novel hybrid tractable classes.  In the case
of binary negative patterns we are able to characterise very large
classes of tractable and NP-hard forbidden patterns.  This leaves the
complexity of just one case unresolved and we conjecture that this
last case is tractable, which would give us a new CSP dichotomy for
hybrid classes of binary CSPs defined by negative patterns.

\subsection*{Contributions}
In this paper we generalise the definition of a CSP instance to that
of a CSP pattern which has two types of tuple in its constraint
relations, tuples which are explicitly allowed/disallowed and tuples
which are labelled as {\em unknown}\footnote{This
  can be viewed as the natural generalisation of the CSP to a
  three-valued logic.}. By defining a
natural notion of containment of patterns in a CSP, we are able
to describe problems defined by {\em forbidden patterns}: a CSP
instance $P$ forbids a particular pattern if this pattern cannot be
contained in $P$. By defining problems in this way, we can capture
both allowed and disallowed constraint tuples as well as structural
properties. We use this framework to capture tractability by
identifying local patterns of allowed \textit{and} disallowed tuples
(within small groups of connected constraints) whose absence is
enough to guarantee tractability.

By extending the notion of the CSP instance to that of a pattern we
are able to unify the following properties:
\begin{itemize}
\item having a particular polymorphism;
\item having a hereditary (coloured) microstructure property, such as broken
  triangle~\cite{Cooper2010:generalizing}; and
\item having a tree structure (tree width $1$).
\end{itemize}

Using the concept of forbidden patterns, we lay foundations for a
theory that can be used to reason about classes of CSPs defined by
hybrid properties.
Since this is the first work of this kind, we primarily focus on the
simplest case: binary patterns in which tuples are either disallowed
or unknown (called {\em negative  patterns}). We give a large class
of binary negative patterns which give rise to intractable classes of
problems and, using this, show that any negative pattern that defines
a tractable class of problems must have a certain structure. We are
able to prove that this structure is nearly enough to guarantee
tractability and conjecture that there is a precise condition
providing dichotomy for tractability defined by forbidding binary
negative patterns. Importantly, our intractability results also allow
us to give a necessary condition on the form of general tractable
patterns.

The remainder of the paper is structured as follows. In
Section~\ref{sec:prelim} we define constraint satisfaction problems,
and give other definitions which will be used in the paper. Then, in
Section~\ref{sec:forbid}, we define the notion of a CSP pattern and
describe classes of problems defined by forbidden patterns. In
Section~\ref{sec:nomaximal} we show that one must take the size of
patterns into account to have a notion of maximal classes defined by
forbidding patterns. In Section~\ref{sec:examples} we give some
examples of tractable classes defined by forbidden patterns on three
variables. In general, we are not yet able to make any conjecture
concerning a dichotomy for hybrid
tractability defined by general forbidden patterns. %
However, in Section~\ref{sec:pivot} we are able to give a necessary
condition for such a class %
to be tractable.
Finally, in Section~\ref{sec:conclusions} we summarise the
results of this work and discuss directions for future research.

\section{Preliminaries}\label{sec:prelim}

\begin{definition}
A CSP instance is a triple \tuple{V,C,D} where
\begin{itemize}
 \item $V$ is a finite set of \textbf{variables} (with $n=|V|$).
 \item $D$ is a finite set called the \textbf{domain} (with $d=|D|$).
 \item $C$ is a set of \textbf{constraints}.  Each constraint $c \in
   C$ is a pair $c = \tuple{\sigma,\rho}$ where
 \begin{itemize}
 \item $\sigma$ is a list of variables called the \textbf{scope} of $c$.
 \item $\rho$ is a relation over $D$ of arity $|\sigma|$  called the
   \textbf{relation} of $c$. It is the set of tuples allowed by $c$.
 \end{itemize}
\end{itemize}

A \textbf{solution} to the CSP instance $P = \tuple{V,D,C}$ is a
mapping $s:V \mapsto D$ where, for each $\tuple{\sigma,\rho} \in C$
we have $s(\sigma) \in \rho$ (where $s(\sigma)$ represents the tuple
resulting from the application of $s$ component-wise to the list of
variables $\sigma$).
\end{definition}

The arity of a CSP is the largest arity
of any of its constraint scopes. Our long-term aim is to identify all
tractable subclasses of the CSP problem which can be detected in
polynomial time.  In this paper we describe a general theory of
forbidden patterns for arbitrary arity but only consider the
implications of the new theory for tractable classes of arity two (binary)
problems specified by finite sets of forbidden patterns.
In such cases we are certain that class membership can be
decided in polynomial time.

The CSP decision problem, which asks whether a particular CSP instance has a
solution, is already NP-complete for binary CSPs. For example, there
is a straightforward reduction from graph colouring to this problem
in which vertices $i$ of the graph map to CSP variables $\var{i}$ and
edges $\{i,j\}$ map to disequality constraints $\var{i} \neq \var{j}$.

It will sometimes be convenient in this paper to use an equivalent
functional formulation of a constraint.  In this alternative
formulation the scope $\sigma$ of the constraint \tuple{\sigma,\rho}
is abstracted to a set of variables and each possible assignment is
seen as a function  $f : \sigma \mapsto D$. 
The constraint relation in this alternative view is then a function
from the set of possible assignments, $D^\sigma$, into the set $\{T,F\}$
where, by convention, the tuples which occur in the constraint relation
are those which map to $T$. It follows that any assignment to the
set of all variables is allowed by \tuple{\sigma,\rho} when its
restriction to $\sigma$ is mapped to $T$ by $\rho$.

\begin{definition}
For any function $f : X \mapsto Y$ and $S \subset X$, the notation
$f|_S$ means the function with domain $S$ satisfying $f|_S(x)=f(x)$
  for all $x \in S$.

Given a set $V$ of variables and a domain $D$, a \textbf{constraint}
in functional representation is a pair $\tuple{\sigma,\rho}$ where
$\sigma \subseteq V$ and $\rho: D^\sigma \mapsto \{T,F\}$. A
\textbf{CSP instance} in functional representation is a triple
$\tuple{V,D,C}$ where $C$ is a set of constraints in functional
representation.

A \textbf{solution} (to a CSP instance $\tuple{V,D,C}$ in functional
representation) is a mapping $s:V \mapsto D$ where, for each
$\tuple{\sigma,\rho} \in C$ we have $\rho(s|_\sigma) = T$.
\end{definition}

The functional formulation is clearly equivalent to the relational
formulation and we will use whichever seems more appropriate
throughout the paper.  The choice will always be clear from the
context.

\subsection*{Relational tractability of binary CSP}

We will refer to a set of relations $\Gamma$ on some finite set $D$
as a {\em constraint language}.

\begin{definition}
Let $D$ be a finite set and let $\Gamma$ be a set of relations on
$D$. We define $\CSP{\Gamma}$ to be the set of problems for which
every constraint \tuple{\sigma,\rho} satisfies $\rho \in \Gamma$.
\end{definition}

A constraint language $\Gamma$ is said to be {\em tractable} if
\CSP{\Gamma'} is a tractable class of problems for each finite
$\Gamma' \subseteq \Gamma$. It is well-known that the tractability of
$\Gamma$ can be determined by studying the {\em polymorphisms} of
$\Gamma$~\cite{Cohen2006:complexity}.
\begin{definition} \label{defn:poly}
Let $D$ be a finite set and let $\rho$ be a binary relation on $D$. A
$k$-ary \textbf{polymorphism} of $\rho$ is a function $f : D^k
\mapsto D$ satisfying
\[
\forall x_1,\ldots,x_k \in \rho,\quad
\tuple{f(x_1[1],\ldots,x_k[1]),f(x_1[2],\ldots,x_k[2])} \in \rho\,.
\]
\end{definition}

It is known that the existence of a non-trivial polymorphism is a
necessary condition for a set of relations to give rise to a
tractable constraint language
\cite{Jeavons1997:closure,Cohen2006:complexity,Bulatov2005:classifying}. Using
this characterisation, almost all tractable classes of the CSP defined
by sets of relations have been determined, though establishing the
full dichotomy still remains an open problem.

\subsection*{Structural tractability of binary CSP}
Structural tractability considers the classes of problems defined by
placing restrictions on the set of constraint scopes, but which allow
arbitrary constraint relations. For simplicity, and as this is the
focus of this paper, we restrict our attention to binary CSPs. In
this case, the set of constraint scopes defines the {\em constraint
graph} whose vertices are the variables and whose edges are the set
of scopes of constraints whose relation is not complete (i.e.~the
Cartesian product $D^{2}$). All definitions and concepts extend
naturally to non-binary CSPs. The key property here is the {\em tree
width} of the constraint graph.

\begin{definition}
Let $G$ be a graph. A {\em tree decomposition} of $G$ is a pair
$(T,X)$, where $T$ is a tree and $X$ is a mapping that associates with
every node $t \in V(T)$ a set $X_t \subset V(G)$ such that for every
$v \in V(G)$ the set $\{t \in V(T) \mid v \in X_t\}$ is connected, and
for every $e \in E(G)$ there is a $t \in V(T)$ such that $e \subset
X_t$.

The {\em width} of a tree decomposition $(T,X)$ is $\max\{|X_t|-1 \mid
t \in V(T)\}$. The {\em tree width} of a graph $G$, denoted $tw(G)$,
is the minimum of the widths of all tree decompositions of $G$.
\end{definition}

The following theorem is classical
\cite{Dechter1989:tree,Freuder1990:complexity}.
\begin{theorem}
\label{thm:tw} Let $P$ be a CSP. If the constraint graph of $P$,
$G_P$, has $tw(G_P) \leq k$, then we can solve $P$ in time $O(nd^{k+1})$.
\end{theorem}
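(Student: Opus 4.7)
The plan is to solve $P$ by bottom-up dynamic programming on a tree decomposition of $G_P$. First I would compute, or assume given, a tree decomposition $(T,X)$ of $G_P$ of width at most $k$, and root $T$ at an arbitrary node $r$; by standard normalisation one may assume $T$ has $O(n)$ nodes and that for every edge $\{u,v\}\in E(G_P)$ some bag $X_t$ contains both endpoints (this is exactly what the definition of tree decomposition guarantees). Because each constraint of $P$ is binary, the scope of every constraint lies inside at least one bag, so the ``join'' of the constraints can be distributed across the tree.

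Next I would associate with each node $t$ a table $S_t\subseteq D^{X_t}$ of \emph{feasible partial assignments}: $\alpha\in S_t$ iff $\alpha$ can be extended to an assignment of the variables appearing in the subtree rooted at $t$ that satisfies every constraint whose scope lies in that subtree. The recurrence is standard: at a leaf $t$, initialise $S_t$ to the set of $\alpha:X_t\to D$ satisfying every constraint $\tuple{\sigma,\rho}$ with $\sigma\subseteq X_t$; at an internal node $t$ with children $t_1,\ldots,t_m$, put $\alpha\in S_t$ iff $\alpha$ satisfies all constraints whose scope is contained in $X_t$ and for each $i$ there exists $\beta_i\in S_{t_i}$ with $\beta_i|_{X_t\cap X_{t_i}}=\alpha|_{X_t\cap X_{t_i}}$. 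By the connectivity property of tree decompositions, a variable $v$ introduced in some subtree remains ``in scope'' along the path of bags containing $v$, so no constraint is lost and the DP correctly characterises global satisfiability: $P$ has a solution iff $S_r\neq\emptyset$, and a witness can be recovered by a top-down pass.

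For the complexity, each table $S_t$ has at most $d^{|X_t|}\le d^{k+1}$ entries. At an internal node $t$ with child $t_i$, compatibility of $\alpha\in S_t$ with some $\beta\in S_{t_i}$ can be tested in $O(d^{k+1})$ time by hashing the restrictions to $X_t\cap X_{t_i}$; each constraint whose scope is contained in $X_t$ is tested against at most $d^{k+1}$ candidate tuples. Since the tree has $O(n)$ nodes and each edge of $G_P$ is treated inside exactly one bag (charge it to the bag closest to the root containing it), the total work is $O(nd^{k+1})$, as required.

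There is no real obstacle here beyond bookkeeping; the only point that needs care is arguing that every constraint is accounted for exactly once and that the join step at a node can be performed in time linear in the table size rather than quadratic. Both follow from the connectivity property of tree decompositions combined with hashing the shared variables, so the proof is essentially a routine dynamic programming argument.
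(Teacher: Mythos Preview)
The paper does not give its own proof of this theorem; it simply quotes it as a classical result with references to Dechter--Pearl and Freuder. Your bottom-up dynamic programming on a rooted tree decomposition is exactly the standard argument underlying those references, and the correctness and $O(nd^{k+1})$ running-time analysis are sound (for fixed $k$ the $O(k^2)$ constraints per bag and the hashing of shared variables at each tree edge keep the work per node to $O(d^{k+1})$). The one caveat you already flagged---``compute, or assume given''---is the right one: the stated bound tacitly assumes either that a width-$k$ decomposition is provided, or that $k$ is a constant so that a decomposition can be found in linear time.
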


What is more, under reasonable technical assumptions, there is no
property of $G_P$ which gives rise to a larger tractable class of
CSPs. This establishes a dichotomy for structural tractability of
binary CSPs. A similar result has been obtained for CSPs of higher
arity. 
See
\cite{Dalmau2002:constraint,Grohe2006:structure,Grohe2007:complexity,Marx2010:tractable}
for more details.  

\section{Forbidden patterns in CSP}
\label{sec:forbid}

In this paper we explain how we can define classes of CSP instances
by forbidding the occurrence of certain patterns. A CSP pattern is a
generalisation of a CSP instance.  In a CSP pattern we define the
relations relative to a three-valued logic on $\{T,F,U\}$, meaning
that the pattern is simply the set of CSP instances in which each
undefined value $U$ is replaced by either $T$ or $F$. Forbidding a
CSP pattern is equivalent to simultaneously forbidding all these
instances as sub-problems.
\begin{definition}
We define a three-valued logic on $\{T,F,\unknown\}$, where
$\unknown$ stands for \emph{unknown} or \emph{undefined}. The set
$\{T,F,\unknown\}$ is partially ordered so that $\unknown < T$ and
$\unknown < F$ but $T$ and $F$ are incomparable.

Let $D$ be a finite set. A $k$-ary \textbf{three-valued relation}
on $D$ is a function $\rho : D^k \mapsto
\{T,F,\unknown\}$. Given a pair of $k$-ary three-valued relations
$\rho$ and $\rho'$, we say $\rho$ \textbf{realises}
$\rho'$ if
\[
\forall x \in D^k, \rho(x) \geq \rho'\,.
\]

\end{definition}
\begin{definition}
A \textbf{CSP pattern}
is a triple $\chi=\tuple{V,D,C}$ where:
\begin{itemize}
\item $V$ is the set of \textbf{variables}.
\item $D$ is the \textbf{domain}.
\item $C$ is a set of constraint \textbf{patterns}.  Each constraint pattern
  $c \in P$ is a pair $c = \tuple{\sigma,\rho}$,
  where $\sigma \subseteq V$, the \textbf{scope} $\sigma$ of $c$, is a set of
  variables and $\rho: D^\sigma \mapsto \{T,F,\unknown\}$ is the
  \textbf{three-valued relation} (in functional representation) of $c$.
\end{itemize}

\noindent
The \textbf{arity} of a CSP pattern $\chi$ is the maximum arity of
any constraint pattern $\tuple{\sigma,\rho}$ of $\chi$.
\end{definition}

We will sometimes define $\rho$ as a partial function from $D^\sigma$
to $\{T,F\}$; the values for which $\rho$ is undefined are those
which are mapped to $U$. For simplicity of presentation, we assume
throughout this paper that no two constraint patterns in $C$ have the
same scope (and that, in the case of CSP instances, that no two
constraints have the same scope).
We will represent binary CSP
patterns by simple diagrams. Each oval represents the domain of a
variable, each dot a domain value. The tuples in constraint patterns
with value $F$ are shown by dashed lines, those with value $T$ by
solid lines and those with value $\unknown$ are not depicted at all.

\subsection*{Contexts}
By further adding simple structure to the domains and variable sets
of patterns, we are able to make the notion of patterns more
specific, and so we can capture larger, and more interesting,
tractable classes. Contexts such as these have been used in the past
to capture tractable classes.  For example, when the domain is
totally ordered we can define the tractable max-closed
class~\cite{Jeavons1995:tractable}, and when we have an independent
total order for the domain of each variable we can capture the
renamable Horn class~\cite{Green2003:tractability}.

The weakest such context that we will consider only allows us to say
when two variables are distinct. A pattern with such a context will
be called \textbf{\unstructured}. We give a general definition of
context, but in this paper the only contexts we require are total
orders on the variable set or the domain.

\begin{definition}
A \textbf{CSP context} is a set of relational structures
$\Omega$ on the product of the variable set and domain\footnote{We
tacitly assume that a context contains at least one structure for
every combination of sizes of variable set and domain.}.

A CSP pattern \tuple{V,D,C} is considered in context $\Omega$ by
associating it with a structure $\omega \in \Omega$ for
appropriately-sized variable set and domain.

Let \tuple{V,D} and \tuple{V',D'} be in context $\Omega$, with
$\omega$ and $\omega'$ the elements of $\Omega$ giving structure to
the sets $V \times D$ and $V' \times D'$, respectively. A
\textbf{contextual homomorphism} is an $\Omega$-structure preserving
function $F : V \times D \mapsto V' \times D'$, i.e.~for each $(u,a),
(v,b) \in V \times D$, $\tuple{(u,a),(v,b)} \in \omega$ implies
$\tuple{F(u,a),F(v,b)} \in \omega'$.   
\end{definition}

\begin{definition}
Two CSP patterns are \textbf{compatible} if they are considered in the
same context.
\end{definition}

Thus, for example, two CSP patterns with totally-ordered domains are
compatible even if the domain sizes are different. In this case, a
contextual homomorphism between the two patterns must preserve the
domain ordering.

\subsection*{Patterns, CSPs and occurrence}
A CSP instance is just a CSP pattern in which the three-valued
relations of the constraint patterns never take the value $\unknown$.
That is, we decide for each possible tuple whether it is in the
relation or not.  %
Furthermore, in a CSP instance, for each pair of variables we assume
that a constraint exists with this scope; if no explicit constraint
is given on this scope, then we assume that the relation is complete,
i.e.~it contains all tuples. This can be contrasted with CSP patterns
for which the absence of an explicit constraint on a pair of
variables implies that the truth value of each tuple is undefined.

In order to define classes of CSP instances by forbidding patterns,
we require a formal definition of an occurrence (containment) of a pattern
within an instance. We define the more general notion of containment of
one CSP pattern within another pattern. Informally, the names of the
variables and domain elements of a CSP pattern are inconsequential
and a containment allows a renaming of the variables and the domain
values of each variable. Thus, in order to define the containment of
patterns, we firstly require a formal definition of a renaming. In an
arbitrary renaming two distinct variables may map to the same
variable and two distinct domain values may map to the same domain
value. However, we do not allow distinct variables $v_{1},v_{2}$ to
map to the same variable if $v_{1},v_{2}$ belong to the scope of any
binary constraint (otherwise this binary constraint could not be
correctly represented on a scope consisting of a single variable).

A domain labelling of a set of variables is just an assignment of
domain values to those variables. Variable and domain renaming
induces a mapping on the domain labellings of scopes of constraints:
we simply assign the renamed domain values to the renamed variables.
There is a natural way to extend this mapping of domain labellings to
a mapping of a constraint pattern: the truth-value of each mapped
domain labelling is the same as the truth-value of the original
domain labelling. However, it may occur that two domain labellings of
some scope map to the same domain labelling, so instead the resulting
value is taken to be the greatest of the original truth-values. (In
order for this process to be well-defined, if two domain labellings
of a constraint are mapped to the same domain labelling, then their
original truth-values must be comparable.) This leads to the
following formal definition of a renaming which is the first step
towards the definition of containment.

\begin{definition}
Let $\chi = \tuple{V,D,C}$ and $\chi' = \tuple{V',D',C'}$ be
compatible CSP patterns.

We say that $\chi'$ is a \defterm{renaming} of $\chi$ if there exist
a variable renaming function $s: V \mapsto V'$ and a domain renaming
function $t: V \times D \mapsto D'$ that satisfy:

\begin{itemize}
\item If $s(v_{1})=s(v_{2})$ for distinct variables $v_{1},v_{2}$,
then there is no constraint pattern $\tuple{\sigma,\rho} \in C$ with
$v_{1},v_{2} \in \sigma$ and $\rho$ a non-trivial relation (a
function which is not identically equal to $U$).
\item $F:V \times D \mapsto V' \times D'$ defined by $F(\tuple{v,a}) =
  \tuple{s(v),t(v,a)}$ is a contextual homomorphism.
\item
For each constraint pattern $\tuple{\sigma,\rho} \in C$, for any two
domain labellings $\ell,\ell' \in D^\sigma$ for which $F(\ell) = F(\ell')$, we
have that $\rho(\ell)$ and $\rho(\ell')$ are comparable.
\item
$C' = \{ \tuple{s(\sigma),\rho'} \mid \tuple{\sigma,\rho} \in
C\}$, where $\rho'(f) = \max \left\{\rho(\ell) \mid F(\ell) = f\right\}$,
for each $f : s(\sigma) \mapsto D$.
\end{itemize}
\end{definition}

We will use patterns to define sets of CSP instances by forbidding
the occurrence (containment) of the patterns in the CSP instances. In
this way we will be able to characterise tractable subclasses of the
CSP.   Informally, a pattern $\chi$ is said to occur in a CSP
instance $P$ if we can find variables in $P$ corresponding to the
variables of $\chi$, such that there is a constraint in $P$ that
realises each constraint pattern in $\chi$. We will now formally
define what we mean by a pattern occurring in another pattern (and by
extension, in a CSP instance).

\begin{definition}
\label{defn:embed} 
We say that a CSP pattern $\chi$ \defterm{occurs}
in a CSP pattern $P =\tuple{V,D,C}$ (or that $P$ \defterm{contains}
$\chi$), denoted $\chi \occursin P$, if there is a renaming
\tuple{V,D,C'} of $\chi$ where, for every constraint pattern
$\tuple{\sigma,\rho'} \in C'$ there is a constraint pattern
$\tuple{\sigma, \rho} \in C$ and, furthermore, $\rho$ realises
$\rho'$.
\end{definition}

\begin{pattern}\vspace{0.5cm}
\begin{center}
  \begin{tabular}{@{\extracolsep{1cm}}ccc}
    \begin{tikzpicture}%
      \path[draw, rounded corners=15pt] (0.5,3) rectangle (1.5,1);
      \fill (1,1.5) circle (0.1);
      \fill (1,2.5) circle (0.1);
      \node at (0.25,1.5) {$a$};
      \node at (0.25,2.5) {$b$};

      \path[draw, rounded corners=15pt] (0.5+2,3) rectangle (1.5+2,1); 
      \fill (1+2,1.5) circle (0.1);
      \fill (1+2,2.5) circle (0.1);
      \node at (0.25+3.5,1.5) {$c$};
      \node at (0.25+3.5,2.5) {$d$};    

      \draw (1,1.5) -- (1+2,2.5);
      \draw (1,2.5) -- (1+2,1.5);
      
      \node at (1,0.5) {$x$};
      \node at (1+2,0.5) {$y$};
    \end{tikzpicture}  
    &
    \begin{tikzpicture}%
      \path[draw, rounded corners=15pt] (0.5,3+0.5) rectangle (1.5,1+0.5);
      \fill (1,1.5+0.5) circle (0.1);
      \fill (1,2.5+0.5) circle (0.1);
      \node at (0.25,1.5+0.5) {$a$};
      \node at (0.25,2.5+0.5) {$b$};

      \path[draw, rounded corners=15pt] (0.5+2,4) rectangle (1.5+2,1); 
      \fill (1+2,1.5) circle (0.1);
      \fill (1+2,2.5) circle (0.1);
      \fill (1+2,3.5) circle (0.1);
      \node at (0.25+3.5,1.5) {$d$};
      \node at (0.25+3.5,2.5) {$c$};
      \node at (0.25+3.5,3.5) {$d'$};

      \draw[dashed] (1,3) -- (1+2,1.5);
      \draw (1,3) -- (1+2,2.5);
      \draw[dashed] (1,3) -- (1+2,3.5);
      
      \draw (1,2) -- (3,1.5);
      \draw (1,2) -- (3,3.5);
      
      \node at (1,0.5) {$x$};
      \node at (1+2,0.5) {$y$};
    \end{tikzpicture}  
    &
    \begin{tikzpicture}
      \path[draw] (1,2.75) circle (0.5);
      \fill (1,2.75) circle (0.1);
      \node at (0.25,2.75) {$b$};
      \node at (1,2) {$z$};

      \draw (1,1.25) circle (0.5);
      \fill (1,1.25) circle (0.1);
      \node at (0.25,1.25) {$a$};
      \node at (1,0.5) {$x$};
      
      \path[draw, rounded corners=15pt] (0.5+2,3) rectangle (1.5+2,1); 
      \fill (1+2,1.5) circle (0.1);
      \fill (1+2,2.5) circle (0.1);
      \node at (0.25+3.5,1.5) {$c$};
      \node at (0.25+3.5,2.5) {$d$};    
      \node at (1+2,0.5) {$y$};

      \draw (1,1.25) -- (1+2,2.5);
      \draw (1,2.75) -- (1+2,1.5);
      \draw[dashed] (1,2.75) -- (1+2,2.5);      
    \end{tikzpicture}  
    \\
    (i) & (ii) & (iii)
  \end{tabular}
\end{center}
\caption{}
\label{pat:embeddingex1}
\end{pattern}

\begin{pattern}\vspace{0.3cm}
\begin{center}
  \begin{tikzpicture}
    \path[draw, rounded corners=15pt] (0.5,3) rectangle (1.5,1);
      \fill (1,1.5) circle (0.1);
      \fill (1,2.5) circle (0.1);
      \node at (0.25,1.5) {$a$};
      \node at (0.25,2.5) {$b$};

      \path[draw, rounded corners=15pt] (0.5+2,3) rectangle (1.5+2,1); 
      \fill (1+2,1.5) circle (0.1);
      \fill (1+2,2.5) circle (0.1);
      \node at (0.25+3.5,1.5) {$c$};
      \node at (0.25+3.5,2.5) {$d$};    

      \draw (1,1.5) -- (1+2,2.5);
      \draw (1,2.5) -- (1+2,1.5);
      \draw[dashed] (1,2.5) -- (1+2,2.5);
     
      \node at (1,0.5) {$x$};
      \node at (1+2,0.5) {$y$};
  \end{tikzpicture}
\end{center}
\caption{}
\label{pat:embeddingex2}
\end{pattern}

\begin{example}
  This example describes three simple containments. Consider the
  three constraint patterns,
  Pattern~\ref{pat:embeddingex1}(i)-(iii). These
  constraint patterns occur in, or are contained in, Pattern~\ref{pat:embeddingex2} by the
  contextual homomorphisms $F_1$, $F_2$, and $F_3$, respectively, which
  we will now describe. 
  
  $F_1$ is simply a bijection. Although the patterns are different,
  this is a valid containment of Pattern~\ref{pat:embeddingex1}(i)
  into Pattern~\ref{pat:embeddingex2} because the three-valued relation
  of Pattern~\ref{pat:embeddingex2} is a realisation of the
  three-valued relation in Pattern~\ref{pat:embeddingex1}(i): we are
  replacing $(b,d) \rightarrow U$ by $(b,d) \rightarrow F$.

  $F_2$ maps $(x,a)$, $(x,b)$, and $(y,c)$ to themselves, and maps
  both $(y,d)$ and $(y,d')$ to $(y,d)$. This merging of domain
  elements is possible because the three-valued constraint relation of
  Pattern~\ref{pat:embeddingex1}(ii) agrees on tuples involving the
  assignments $(y,d)$ and $(y,d')$ and, furthermore, the restriction
  of the three-valued relation of Pattern~\ref{pat:embeddingex1}(ii)
  to either of these two assignments is equivalent to the three-valued
  constraint relation of Pattern~\ref{pat:embeddingex2}: $(b,d)
  \rightarrow F$ and $(a,d) \rightarrow T$. 
  
  Finally, $F_3$ maps $(y,c)$ and $(y,d)$ to themselves, and maps
  $(x,a)$ and $(z,b)$ in Pattern~\ref{pat:embeddingex1}(iii) to
  $(x,a)$ and $(x,b)$, respectively, in
  Pattern~\ref{pat:embeddingex2}. As before, this merging of variables
  is possible because the three-valued relations agree. 
\end{example}

Before continuing we need to define what we mean when we say that a
class of CSP instances is definable by forbidden patterns.
\begin{definition}
\label{defn:forbid} Let $C$ be any class of CSP instances.  We say
that $C$ is \defterm{definable by forbidden patterns} if there is
some context $\Omega$ and some set of patterns $\mathcal X$ for which the
set of CSP instances in which none of the patterns in $\mathcal X$ occur
are precisely the instances in $C$.
\end{definition}

\begin{notation}
Let $\mathcal X$ be a set of %
CSP patterns with maximum arity
$k$. We will use $\cspneg{\mathcal X}$ to denote the set of CSP instances
with arity at most $k$ (compatible with $\mathcal X$) in which no element
$\chi \in \mathcal X$ occurs.  When $\mathcal X$ is a singleton $\{\chi\}$ we will
allow \cspneg{\chi} to mean \cspneg{\{\chi\}}.
\end{notation}

\subsection*{Tractable Patterns}
In this paper we will define, by forbidding certain patterns,
tractable subclasses of the CSP. Furthermore, we will give examples
of truly hybrid classes (i.e.~not definable by purely relational or
purely structural properties).

\begin{definition}
A set of patterns $\mathcal X$ is \textbf{intractable} if $\cspneg{\mathcal
  X}$ is NP-hard. It is \textbf{tractable} if there is a
polynomial-time algorithm to solve $\cspneg{\mathcal X}$. A single pattern $\chi$
is tractable (intractable) if $\{\chi\}$ is tractable (intractable).
\end{definition}

It is worth observing that classes of CSP instances defined by
forbidding patterns do not have a fixed domain. Recall, however, that
CSP instances have finite domains.

We will need the following simple lemma for our proofs of intractability
results in later sections of this paper.

\begin{lemma}
\label{lem:embed}
Let $\mathcal X$ and $\mathcal T$ be sets of compatible CSP patterns and
suppose that for every pattern $\tau \in \mathcal T$, there is some pattern
$\chi \in \mathcal X$ for which $\chi \occursin \tau$.
Then $\cspneg{\mathcal X} \subseteq \cspneg{\mathcal T}$.
\end{lemma}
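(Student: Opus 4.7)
The plan is to argue by contrapositive: if $P \in \cspneg{\mathcal X}$ does not belong to $\cspneg{\mathcal T}$, then some $\tau \in \mathcal T$ satisfies $\tau \occursin P$, and by hypothesis some $\chi \in \mathcal X$ satisfies $\chi \occursin \tau$; so it is enough to prove that the relation $\occursin$ on compatible CSP patterns is \emph{transitive}. All the substance of the lemma lies in this transitivity claim, so I would state and prove it as a separate sub-claim before invoking it.

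To prove transitivity, given witnesses $(s_1,t_1)$ for $\chi \occursin \tau$ and $(s_2,t_2)$ for $\tau \occursin P$, I would compose them: set $s = s_2 \circ s_1$ on variables and $t(v,a) = t_2(s_1(v), t_1(v,a))$ on variable--value pairs, and let $F(v,a) = (s(v), t(v,a))$. The first three conditions of the renaming definition then reduce to simple compositional arguments: $F$ is a contextual homomorphism because composition of $\Omega$-structure preserving maps is $\Omega$-structure preserving, and the ``non-collapse on scopes with non-trivial relations'' condition holds by a two-case split on whether $s_1$ already identifies the two variables (handled by the $\chi \occursin \tau$ witness) or not (handled by the $\tau \occursin P$ witness, applied to the scope in $\tau$ coming from the $\chi$-constraint, which must carry a non-trivial relation since its $\tau$-realisation does).

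The main obstacle is the fourth condition together with the realisation requirement of Definition~\ref{defn:embed}: I must show that if two labellings $\ell,\ell' \in D^\sigma$ satisfy $F(\ell)=F(\ell')$, then $\rho_\chi(\ell)$ and $\rho_\chi(\ell')$ are comparable in $\{T,F,U\}$, and that the constraint of $P$ on $s(\sigma)$ realises the max-over-preimage relation built from $\rho_\chi$ via $F$. I would handle this by factoring $F$ as $F_2 \circ F_1$, writing $m = F_1(\ell)$, $m' = F_1(\ell')$, and splitting on whether $m = m'$ (then comparability of $\rho_\chi(\ell),\rho_\chi(\ell')$ is immediate from the $\chi \occursin \tau$ witness) or $m \neq m'$ with $F_2(m)=F_2(m')$ (then the $\tau \occursin P$ witness gives comparability of $\rho_\tau(m),\rho_\tau(m')$; any non-$U$ value of $\rho_\chi$ on a preimage must appear, via realisation, as the corresponding value of $\rho_\tau$, which pins down comparability). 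The max-over-preimage computation then telescopes: the renamed relation at $f \in D^{s(\sigma)}$ is the max of $\rho_\chi(\ell)$ over all $\ell \in F^{-1}(f)$, and, grouping these $\ell$ by their image $m = F_1(\ell) \in F_2^{-1}(f)$, this equals the max over $m$ of the renamed-through-$F_1$ relation at $m$, which is dominated by $\rho_\tau(m)$ and hence by $\rho_P(f)$. This gives realisation of the composite renaming by $P$, completing the transitivity proof and hence the lemma.
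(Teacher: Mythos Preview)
Your approach is exactly the paper's: take $P \in \cspneg{\mathcal X}$, suppose $\tau \occursin P$ for some $\tau \in \mathcal T$, pick $\chi \in \mathcal X$ with $\chi \occursin \tau$, and conclude $\chi \occursin P$ by transitivity of $\occursin$, contradicting $P \in \cspneg{\mathcal X}$. The only difference is that the paper simply asserts the transitivity step (``$\chi \occursin \tau \occursin P$ and hence $\chi \occursin P$'') without justification, whereas you correctly identify this as the substantive claim and supply a full proof of it; your composition argument and the case splits for the non-collapse and comparability conditions are sound, so your write-up is in fact more complete than the paper's own.
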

\begin{proof}
Let $P \in \cspneg{\mathcal X}$, so $\chi \notoccursin P$ for each $\chi
\in \mathcal X$. Then we cannot have $\tau \occursin P$ for any $\tau \in
\mathcal T$, since this would imply that there exists some $\chi \in \mathcal
X$ such that $\chi \occursin \tau \occursin P$ and hence that $\chi
\occursin P$. Hence, $P \in \cspneg{\mathcal T}$.
\end{proof}

\begin{corollary}
Let $\mathcal X$ and $\mathcal T$ be sets of compatible CSP patterns and
suppose that for every pattern $\tau \in \mathcal T$, there is some
pattern $\chi \in \mathcal X$ for which $\chi \occursin \tau$.

We then have that \cspneg{\mathcal T} is intractable if \cspneg{\mathcal X} is
intractable and conversely, that \cspneg{\mathcal X} is tractable whenever
\cspneg{\mathcal T} is tractable.
\end{corollary}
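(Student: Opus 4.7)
The plan is to invoke Lemma~\ref{lem:embed} directly on the hypothesis to obtain the set-theoretic inclusion $\cspneg{\mathcal X} \subseteq \cspneg{\mathcal T}$, and then to translate this inclusion into the two complexity statements. The hypothesis of the corollary is word-for-word that of the lemma, so this first step is immediate.

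For the tractability direction, I would argue that any polynomial-time algorithm $A$ which decides $\cspneg{\mathcal T}$ also decides $\cspneg{\mathcal X}$: every instance $P \in \cspneg{\mathcal X}$ lies in $\cspneg{\mathcal T}$ by the inclusion, so running $A$ on $P$ is valid, takes polynomial time, and returns the correct answer. Hence a polynomial-time algorithm for $\cspneg{\mathcal T}$ yields one for $\cspneg{\mathcal X}$, so $\cspneg{\mathcal X}$ is tractable whenever $\cspneg{\mathcal T}$ is.

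For the intractability direction, I would take the contrapositive of exactly the same observation: were $\cspneg{\mathcal T}$ tractable, the previous paragraph would make $\cspneg{\mathcal X}$ tractable, contradicting the NP-hardness of $\cspneg{\mathcal X}$. Equivalently, any polynomial reduction from a canonical NP-hard problem into $\cspneg{\mathcal X}$ produces instances which, by the inclusion, already lie in $\cspneg{\mathcal T}$, so that reduction also witnesses NP-hardness of $\cspneg{\mathcal T}$.

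There is no real obstacle here; the corollary reads almost immediately off Lemma~\ref{lem:embed}. The only point worth being careful about is the direction of the inclusion versus the direction of the complexity implications: containment of instance classes transfers tractability in one direction and hardness in the opposite direction, and it is easy to get this backwards in prose. Keeping the argument phrased in terms of a single algorithm that handles every instance it is fed avoids that pitfall.
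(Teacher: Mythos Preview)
Your proposal is correct and matches the paper, which states the corollary without proof as an immediate consequence of Lemma~\ref{lem:embed}. One small caution: your contrapositive argument (``were $\cspneg{\mathcal T}$ tractable \ldots\ contradicting the NP-hardness of $\cspneg{\mathcal X}$'') tacitly assumes $\mathrm{P}\neq\mathrm{NP}$ and is therefore not quite equivalent to the direct reduction argument you give afterwards---the latter is the one that actually establishes NP-hardness of $\cspneg{\mathcal T}$ unconditionally, so lead with it.
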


Finally, we give a very simple example of a tractable pattern. This
is an example of a {\em negative} pattern since the only truth-values
in the relations are $F$ and $U$.
We will use the notation NEQ($\var{1},\ldots,\var{r}$) to denote the
fact that the variables $\var{1},\ldots,\var{r}$ are all distinct.
\begin{pattern}\vspace{0.3cm}
\centering
\begin{tikzpicture}
  \node at (0.5,4.2) {$v$};
  \path[draw,rounded corners=15pt] (0,4) rectangle (1,2.5);
  \fill (0.5,3.25) circle (0.1);
  \node at (-0.3,3.25) {$a$};

  \node at (2,1.7) {$w$};
  \path[draw,rounded corners=15pt] (1.5,1.5) rectangle (2.5,0);
  \fill (2,0.75) circle (0.1);
  \node at (1.2,0.75) {$b$};

  \node at (4,3.2) {$x$};
  \path[draw,rounded corners=15pt] (3,3) rectangle (4,1.5);
  \fill (3.5,2.5) circle (0.1);
  \node at (4.3,2.5) {$c$};
  \fill (3.5,2) circle (0.1);
  \node at (4.3,2) {$c'$};

  \draw[dashed] (0.5,3.25) -- (3.5,2.5);
  \draw[dashed] (2,0.75) -- (3.5,2);

\end{tikzpicture}

NEQ($v,w,x$), $c \neq c'$
\caption{A very simple negative pattern.}
\label{pat:simple}
\end{pattern}

\begin{example}
Consider the pattern given as Pattern~\ref{pat:simple}. This defines
a class of CSPs which is trivially tractable. We can apply a
pre-processing step consisting of first establishing arc consistency,
and then assigning value $c$ to variable $x$ (and eliminating the
variable $x$) if this assignment is consistent with all remaining
assignments to other variables. Forbidding Pattern~\ref{pat:simple}
ensures that after this pre-processing step there are no paths of
length greater than $2$ in the constraint graph. Thus, any problem
forbidding Pattern~\ref{pat:simple} can be decomposed into a set of
independent sub-problems, each of maximum size $2$.
\end{example}

\subsection*{Relational and structural tractability as forbidden patterns}
The following examples show certain strengths %
of this notion for defining tractable classes. First, in
Example~\ref{ex:maxbin}, we show that forbidden patterns properly
generalise polymorphisms. Then, in Example~\ref{ex:tw}, we show that
we can define the class of tree-structured CSPs by a single forbidden
pattern.

\begin{example}
\label{ex:maxbin} Let $\tuple{D,<}$ be any totally ordered domain.  A
binary relation $\rho$ over $D$ is said to be {\em max-closed} if,
for any tuples $\tuple{a,b},\tuple{a',b'} \in \rho$ we have that
$\tuple{\max(a,a'),\max(b,b')} \in \rho$.  It is well known that the
class of CSP instances whose relations are binary max-closed is
tractable~\cite{Jeavons1995:tractable}. We can define the class of
max-closed CSPs as those forbidding the following pattern
(Pattern~\ref{pat:max2}):

\begin{itemize}
\item CSP context: the variable set is unstructured and the domain is
  totally ordered.
 \item Variables: $\{x,y\}$, where $x \neq y$.
\item Domain: The ordered set $\{0,1\}$ with $0 < 1$.
\item A single constraint pattern with scope $\{x,y\}$ and
  three-valued relation:
\begin{align*}
 \{x \mapsto 0, y \mapsto 0\} & \mapsto  \unknown\\
 \{x \mapsto 0, y \mapsto 1\} & \mapsto  T\\
 \{x \mapsto 1, y \mapsto 0\} & \mapsto  T\\
 \{x \mapsto 1, y \mapsto 1\} & \mapsto  F\\
\end{align*}
\end{itemize}
In this pattern, the context specifies that $x \neq y$ and $0 < 1$,
so we have limited the contextual homomorphisms to those that map $x$
and $y$ to distinct variables and $0$ and $1$ to ordered domain
values. Thus, saying that a pattern \maxbin\ is forbidden in a CSP
instance $P$ is equivalent to saying there is no constraint allowing
a pair of tuples $(a,b)$ and $(a',b')$, where $a < a'$ and $b' < b$,
such that $(a',b)$ is disallowed; this is equivalent to saying that
every constraint must be max-closed.
\end{example}

\begin{pattern}\vspace{0.3cm}
\centering
\begin{tikzpicture}
  \path[draw,rounded corners=15pt] (0,3) rectangle (1,1);
  \fill (0.5,1.5) circle (0.1);
  \fill (0.5,2.5) circle (0.1);

  \path[draw,rounded corners=15pt] (3,3) rectangle (4,1);
  \fill (3.5,1.5) circle (0.1);
  \fill (3.5,2.5) circle (0.1);

  \draw[dashed] (0.5,2.5) -- (3.5,2.5);
  \draw (0.5,2.5) -- (3.5,1.5);
  \draw (3.5,2.5) -- (0.5,1.5);

  \node at (0.5,0.5) {$x$};
  \node at (3.5,0.5) {$y$};

  \node at (-0.5,1.5) {$0$};
  \node at (-0.5,2.5) {$1$};
  \node at (4.5,1.5) {$0$};
  \node at (4.5,2.5) {$1$};
\end{tikzpicture}

Context: $x \neq y$, $0 < 1$

\caption{The \maxbin\ pattern.}
\label{pat:max2}
\end{pattern}

The set of max-closed relations are also known as the relations which
are closed under the polymorphism $\max(x,y)$.

Recall from Definition \ref{defn:poly} that, given a finite set $D$
and a binary relation $\rho$ on $D$, a $k$-ary polymorphism of
$\rho$ is a function $f : D^k \mapsto D$ satisfying
\[
\forall x_1,\ldots,x_k \in \rho,\quad f(x_1,\ldots,x_k) \in \rho\,,
\]
where
$f(x_1,\ldots,x_k) = \tuple{f(x_1[1],\ldots,x_k[1]),f(x_1[2],\ldots,x_k[2])}$.
Clearly, we can define the set of relations which have a particular
$k$-ary polymorphism $f$ as the set of relations forbidding a
particular set of patterns, namely those which allow $k$ tuples
$x_1,\ldots,x_k$ but which disallow $f(x_1,\ldots,x_k)$. Thus, every
class of binary CSPs defined by having a particular polymorphism can
be defined using forbidden patterns.

We now turn our attention to structural tractability. In
Example~\ref{ex:tw} below we show that a forbidden pattern can
capture the class of CSPs with tree width $1$.

\begin{example}
\label{ex:tw}

\begin{pattern}\vspace{0.3cm}
\centering
\begin{tikzpicture}
 \path[draw,rounded corners=15pt] (0,4) rectangle (1,2.5);
  \fill (0.5,3.25) circle (0.1);

  \path[draw,rounded corners=15pt] (1.5,1.5) rectangle (2.5,0);
  \fill (2,0.75) circle (0.1);

  \path[draw,rounded corners=15pt] (3,3) rectangle (4,1.5);
  \fill (3.5,1.9) circle (0.1);
  \fill (3.5,2.6) circle (0.1);

  \node at (0.5,2.2) {$\var{1}$};
  \node at (2.7,0) {$\var{2}$};
  \node at (3.5,1.2) {$\var{3}$};

  \draw[dashed] (0.5,3.25) -- (3.5,2.6);
  \draw[dashed] (2,0.75) -- (3.5,1.9);
\end{tikzpicture}

$\var{1} < \var{2} < \var{3}$ \caption{Tree structure pattern
(\tree)} \label{pat:tree}
\end{pattern}

Consider the pattern \tree, given as
Pattern~\ref{pat:tree}. We will show that the class \cspneg{\tree} is
exactly the set of CSPs whose constraint graph is a forest (i.e.~has
tree width $1$). First, suppose $P \in \cspneg{\tree}$. Then, there
exists some ordering $\pi=(v_1,\ldots,v_n)$ such that each variable
shares a constraint with at most one variable preceding it in the
ordering. %
On the other hand, suppose $P$ is a CSP whose constraint graph is a
tree. By ordering the vertices according to a pre-order traversal,
we obtain an ordering in which  each variable shares a constraint
with at most one variable preceding it in the ordering (its parent);
thus, $P \in \cspneg{\tree}$.  

\end{example}

\section{On characterising tractable forbidden patterns}
\label{sec:nomaximal}

In relational tractability we can define a maximal tractable
sub-problem of the CSP problem.  Such a class of relations is maximal
in the sense that it is not possible to add even one more relation to
the set without sacrificing tractability.

In the case of structural tractability the picture is less clear,
since here we measure the complexity of an infinite set of
hypergraphs (or, more generally, relational structures).  We obtain
tractability if we have a bound on some width measure of these
structures.  Whatever width measure is chosen we have a containment of
the class with width bounded by $k$ inside that of the class of width
bounded by $k+1$ and so no maximal class is possible. Nevertheless,
for each $k$ there is a unique maximal class of structurally
tractable instances.

In this section, we show that in the case of forbidden patterns the
situation is similar.

\begin{definition}
Let $\chi = \tuple{V,D,C}$ and $\tau=\tuple{V',D',C'}$ be any two
\unstructured\ CSP patterns.  We can form the disjoint unions $V
\dotcup V'$ and $D \dotcup D'$.  Now, extend each constraint pattern
in $C$ to be over the domain $D \dotcup D'$ by setting the value of
any tuple including elements of $D'$ to be $\unknown$, and extend
similarly the constraint patterns in $C'$. In this way we can define
$C \dotcup C'$ and then we set the \defterm{disjoint union} of $\chi$
and $\tau$ to be $\chi \dotcup \tau = \tuple{V\dotcup V', D\dotcup
D', C
  \dotcup C'}$.
\end{definition}

\newcommand{\pat}{\ensuremath{\tau}}

\begin{lemma}\label{lem:nomaximal}
Let $\chi$ and $\tau$ be \unstructured\ binary CSP patterns. Then
 \[\cspneg{\chi} \cup \cspneg{\pat} \subsetneq \cspneg{\chi
  \dotcup \pat}\,.\]
Moreover, we have that \cspneg{\chi\dotcup \pat} is
tractable whenever \cspneg{\chi} and \cspneg{\pat} are tractable.
\end{lemma}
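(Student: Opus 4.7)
The plan is to establish the three sub-claims: the inclusion $\cspneg{\chi} \cup \cspneg{\pat} \subseteq \cspneg{\chi \dotcup \pat}$, its strictness, and the tractability preservation.

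For the inclusion, the argument is a direct application of Lemma~\ref{lem:embed}. One verifies that $\chi \occursin \chi \dotcup \pat$ (and symmetrically $\pat \occursin \chi \dotcup \pat$) by taking the natural renaming whose variable map is the inclusion $V \hookrightarrow V \dotcup V'$ and whose domain map is the inclusion $D \hookrightarrow D \dotcup D'$. The extended constraint pattern of $\chi$ inside $\chi \dotcup \pat$ coincides with $\chi$'s own relation on $D$-tuples and takes value $U$ on any tuple involving an element of $D'$, so it realises $\chi$'s constraint pattern (the $U$ entries are dominated by any value that appears in $\chi$). Lemma~\ref{lem:embed} then yields $\cspneg{\chi} \subseteq \cspneg{\chi \dotcup \pat}$ and $\cspneg{\pat} \subseteq \cspneg{\chi \dotcup \pat}$.

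For strictness, the plan is to exhibit an explicit $P$ that contains both $\chi$ and $\pat$ yet lies in $\cspneg{\chi \dotcup \pat}$, exploiting the simplicity convention that forbids two constraint patterns with the same scope. The strategy is to build $P$ small enough that every attempted renaming of $\chi \dotcup \pat$ into $P$ is forced to collapse a constraint pattern coming from $\chi$ and one coming from $\pat$ onto a common scope of $P$, with distinct renamed relations. Concretely, identify the variables $V$ of $\chi$ with the variables $V'$ of $\pat$ through a shared variable set in $P$, and for each resulting scope impose the unique constraint of $P$ that realises the two renamed relations simultaneously; this is always possible for negative patterns (take the pointwise infimum in $\{U,T,F\}$ with $U$ at the bottom) and more generally whenever the two patterns are pointwise compatible on corresponding scopes.

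For tractability, the plan is a branch-and-reduce algorithm based on the following structural observation: if $P \in \cspneg{\chi \dotcup \pat}$ and $\chi$ occurs in $P$ via a renaming with variable image $W \subseteq V_P$, so $|W| \leq |V|$, then every occurrence of $\pat$ in $P$ has its variable image intersecting $W$; otherwise the two renamings combine (the variable halves being disjoint, and cross-domain images chosen arbitrarily since the cross-domain labellings all take value $U$ in $\chi \dotcup \pat$ and thus trivially satisfy the comparability condition and are realised by anything in $P$) into a renaming of $\chi \dotcup \pat$ into $P$, contradicting the assumption. The algorithm then (i) checks whether $\chi$ occurs in $P$, which is polynomial because $\chi$ is constant-sized; (ii) if not, invokes the $\cspneg{\chi}$-algorithm; (iii) otherwise fixes any occurrence with image $W$ and branches over the $|D_P|^{|W|}$ assignments $\alpha \colon W \to D_P$ that are consistent with $P$'s constraints on $W$; (iv) for each $\alpha$, forms the sub-CSP $P_\alpha$ on $V_P \setminus W$ with per-variable domains shrunk through the binary constraints joining $W$ to the rest; (v) solves each $P_\alpha$ with the $\cspneg{\pat}$-algorithm, since the structural observation together with the fact that domain-shrinking only deletes candidate renamings yields $P_\alpha \in \cspneg{\pat}$; (vi) recombines each successful sub-solution with $\alpha$ into a solution of $P$.

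The main obstacle I anticipate is pinning down the structural observation rigorously: one must verify that the candidate combined renaming satisfies all of the renaming requirements, namely that no distinct variables of $V \dotcup V'$ need to be merged (because their images in $V_P$ sit in the disjoint sets $W$ and $V_P \setminus W$), that comparability holds for both $\chi$-side and $\pat$-side constraint patterns under the $D \dotcup D'$ extension, and that the resulting renamed constraint set remains simple; once this is pinned down, both the strictness witness and the correctness of the reduction step follow in a uniform manner.
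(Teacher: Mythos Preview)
Your inclusion argument (via Lemma~\ref{lem:embed}) and your tractability argument (branch over assignments to the variable image $W$ of a fixed occurrence of $\chi$, then solve the residual instance with the $\cspneg{\pat}$-algorithm) are essentially the paper's approach. If anything, your version of tractability is a touch cleaner: by deleting $W$ from $P_\alpha$ you force any occurrence of $\pat$ there to use only variables in $V_P \setminus W$, so the combination with the $\chi$-occurrence on $W$ is automatically variable-disjoint; the paper keeps all variables in $P_t$ and relies implicitly on the same disjointness.

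The strictness argument, however, has a genuine gap. You identify the variables of $\chi$ and $\pat$ in $P$ and then attempt to impose, on each scope, a single constraint realising both renamed relations; you note yourself that this requires ``pointwise compatibility'' and only certainly succeeds for negative patterns. But the lemma is stated for arbitrary flat binary patterns: if some constraint of $\chi$ assigns $T$ to a tuple where the corresponding constraint of $\pat$ assigns $F$, no such $P$ exists. The paper sidesteps this by keeping the \emph{domains} disjoint while merging variables: take $P$ on $\max(|V|,|V'|)$ variables with domain $D \dotcup D'$, let $\chi$ occur using only $D$-values and $\pat$ using only $D'$-values, so no tuple carries competing truth-values. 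Your proposed mechanism for why $\chi \dotcup \pat \notoccursin P$ (that any renaming is forced to collapse a $\chi$-constraint and a $\pat$-constraint onto a common scope with \emph{distinct} renamed relations, violating simplicity) is also not the right one, and fails outright when $\chi = \pat$. The actual obstruction is simpler: in the flat context all variables of $\chi \dotcup \pat$ are declared pairwise distinct, so any contextual homomorphism into $P$ must be variable-injective, and $P$ just has fewer than $|V|+|V'|$ variables.
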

\begin{proof}
We begin by showing the strict inclusion
\[
\cspneg{\chi} \cup \cspneg{\pat} \subsetneq \cspneg{\chi \dotcup
  \pat}\,.
\]
That the inclusion holds follows directly from
Lemma~\ref{lem:embed}. To see that the inclusion is strict,
observe that $\chi$ and \pat\ occur in a CSP pattern
whose domain is the disjoint union of those for $\chi$ and
\pat\, but whose variable set has size equal to the larger of
the two original variable sets. Any CSP instance containing this
pattern is neither in $\cspneg{\chi}$ nor in $\cspneg{\pat}$.
However, we can construct a CSP instance containing this pattern which
is contained in \cspneg{\chi \dotcup \pat}, as the assumption that
all variables are distinct means that $\chi \dotcup \pat$ is not
contained in this pattern.

Suppose $P \in$ \cspneg{\chi \dotcup \pat}. If $P \in \cspneg{\chi}
\cup \cspneg{\pat}$ then $P$ can be solved in polynomial time, by the
tractability of \cspneg{\chi} and \cspneg{\pat}.

So we may suppose that $\chi \occursin P$. Choose a particular
occurrence of $\chi$ in $P$ and let $\sigma$ denote the set of
variables used in the containment. Consider any assignment $t : \sigma
\mapsto D$. Let $P_t$ denote the problem obtained by making this
assignment and then enforcing arc-consistency on the resulting
problem. This corresponds to adding some new unary
constraints to $P$.

We will show that if \pat\ occurs in $P_t$ then $\chi \dotcup
\pat$ must occur in $P$. To see this, observe that any containment of
\pat\ in $P_m$ naturally induces a containment of \pat\ in $P$ that
extends to a containment of $\chi \dotcup \pat$ in $P$, by
considering the occurrence of $\chi$ in $\sigma$. Thus, we can
conclude that $P_t \in \cspneg{\pat}$, and so can be solved in
polynomial time.

By construction, any solution to $P_t$ extends to a solution to $P$ by
adding the assignment $t$ to the variables $\sigma$. Moreover, every
solution to $P$ corresponds to a solution to $P_t$ for some $t :
\sigma \mapsto D$. Since the size of $\chi$ is fixed, we can
iterate over the solutions to $\chi$ in polynomial time. If $P$ has a
solution, then we will find it as the solution to some $P_t$. If we
find that no $P_t$ has a solution, then we known $P$ does not have a
solution. Thus, since we can solve each $P_t$ in polynomial time, we
can also solve $P$ in polynomial time.
\end{proof}
\begin{corollary}
No tractable class defined by forbidding a flat pattern is
maximal.
\end{corollary}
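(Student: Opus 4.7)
The plan is to derive the corollary as an immediate consequence of Lemma~\ref{lem:nomaximal}, applied with $\tau = \chi$. Let $\chi$ be any flat pattern for which $\cspneg{\chi}$ is tractable. Form the disjoint union $\chi \dotcup \chi$ (taking two disjoint copies of the variable set and domain of $\chi$, each copy of the constraint patterns extended by $\unknown$ on the other copy's domain).

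Since both summands are tractable, the second part of Lemma~\ref{lem:nomaximal} gives that $\cspneg{\chi \dotcup \chi}$ is tractable. The first part (strict inclusion) gives
\[
\cspneg{\chi} \;=\; \cspneg{\chi} \cup \cspneg{\chi} \;\subsetneq\; \cspneg{\chi \dotcup \chi}\,,
\]
so $\cspneg{\chi}$ is strictly contained in a larger tractable class defined by forbidding a flat pattern, and is therefore not maximal.

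There is essentially no obstacle beyond invoking the lemma: the only thing worth double-checking is that the construction of $\chi \dotcup \chi$ is well-defined when both summands are literally the same pattern (one should rename one copy so that the variables and domain values are disjoint before taking the union), and that the result is still a flat pattern, so the corollary's hypothesis about flat patterns is respected. Both are immediate from the definition of disjoint union given just before the lemma.
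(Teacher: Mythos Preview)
Your proof is correct and essentially identical to the paper's own argument: the paper likewise takes the disjoint union of two copies of $\chi$ (denoted $\chi^{(2)}$) and invokes Lemma~\ref{lem:nomaximal} to obtain both tractability of $\cspneg{\chi^{(2)}}$ and the strict inclusion $\cspneg{\chi} \subsetneq \cspneg{\chi^{(2)}}$.
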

\begin{proof}
  Let $\chi$ be any tractable flat pattern. Consider the pattern
  defined by the disjoint union of two copies of $\chi$, which we
  denote $\chi^{(2)}$. By Lemma~\ref{lem:nomaximal} we have that
  $\cspneg{\chi^{(2)}}$ is tractable but also that
  \[
  \cspneg{\chi} \subsetneq \cspneg{\chi^{(2)}}\,,
  \]
  and hence $\cspneg{\chi}$ is not a maximal tractable class.
\end{proof}

It follows that we cannot characterise tractable forbidden patterns
by exhibiting all maximal classes defined by tractable forbidden
patterns. Indeed, a consequence of Lemma~\ref{lem:nomaximal} is that
we can construct an infinite chain of patterns, such that forbidding
each one gives rise to a slightly larger tractable class. Naturally,
if we place an upper bound on the size of the patterns then
there are only finitely many patterns that we can consider,
so maximal tractable classes defined by forbidden patterns of bounded
size necessarily exist. 

For the moment, we are not able to make a conjecture as to the
structure of a dichotomy for general forbidden patterns. Nonetheless,
in Section \ref{sec:pivot}, by restricting our attention to a special
case, {\em forbidden negative patterns}, we are able to obtain
interesting general results.

\section{Tractable forbidden patterns on three variables}
\label{sec:examples}
In the previous section, we showed that we need to place restrictions
on the size of the forbidden patterns if we want to establish any
sort of dichotomy. Since forbidden patterns on two variables only
place restrictions on the set of constraint relations that can occur
in an instance, the first interesting hybrid classes occur when we
consider three variables. In this section we present two hybrid
tractable classes of binary CSP instances characterised by forbidden
patterns on three variables.

\begin{pattern}\vspace{0.3cm}
\centering
\begin{tikzpicture}
  \path[draw,rounded corners=15pt] (0,4) rectangle (1,2.5);
  \fill (0.5,3.25) circle (0.1);

  \path[draw,rounded corners=15pt] (1.5,1.5) rectangle (2.5,0);
  \fill (2,0.75) circle (0.1);

  \path[draw,rounded corners=15pt] (3,3) rectangle (4,1.5);
  \fill (3.5,1.9) circle (0.1);
  \fill (3.5,2.6) circle (0.1);

  \node at (0.5,2.2) {$\var{1}$};
  \node at (2.7,0) {$\var{2}$};
  \node at (3.5,1.2) {$\var{3}$};
  \node at (4.25,1.9) {$a$};
  \node at (4.25,2.6) {$b$};

  \draw (0.5,3.25) -- (3.5,1.9);
  \draw (2,0.75) -- (3.5,2.6);
  \draw (0.5,3.25) -- (2,0.75);
  \draw[dashed] (0.5,3.25) -- (3.5,2.6);
  \draw[dashed] (2,0.75) -- (3.5,1.9);
\end{tikzpicture}

$\var{1} < \var{2} < \var{3}$ \caption{Broken triangle pattern
(\btp)} \label{pat:btp}
\end{pattern}

The first example, already introduced in \cite{Cooper2010:generalizing}, is
known as the \textbf{broken-triangle property} (\btp). In order to
capture this class by a forbidden pattern we have to work in a context
where the set of variables is totally ordered. In this case
pattern containment must preserve the total order. We can define
the broken-triangle property by the forbidden pattern \btp, shown in
Pattern~\ref{pat:btp}. The following result was proved in
\cite{Cooper2010:generalizing}. 

\begin{theorem}\label{thm:btp}
Let \btp\ be the pattern in Pattern~\ref{pat:btp}. The class of CSP
instances $\cspneg{\btp}$ can be solved in polynomial time.
\end{theorem}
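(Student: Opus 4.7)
My plan is to exhibit a polynomial-time algorithm that first enforces arc consistency on $P$ and then greedily constructs a solution by assigning variables in the given order $v_1 < v_2 < \ldots < v_n$. Arc consistency runs in polynomial time and preserves the solution set, so if it empties any domain we correctly output ``no solution''. Otherwise, starting from an arbitrary $a_1 \in D(v_1)$, I extend the partial assignment one variable at a time: having an assignment $s$ on $\{v_1, \ldots, v_k\}$ that satisfies all binary constraints among these variables, I pick any $a_{k+1} \in D(v_{k+1})$ compatible with each of $s(v_1), \ldots, s(v_k)$. Correctness then reduces to showing such an $a_{k+1}$ always exists.

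The key lemma I would prove is: if $P$ is arc-consistent, $P \in \cspneg{\btp}$, and $s$ is a partial assignment on the prefix $\{v_1, \ldots, v_k\}$ satisfying all binary constraints among these variables, then some $a \in D(v_{k+1})$ satisfies $(s(v_j), a) \in \rho_{j, k+1}$ for every $j \le k$. I would argue by contradiction: assume no such $a$ exists, and define the ``killing sets'' $S_j = \{a \in D(v_{k+1}) : (s(v_j), a) \notin \rho_{j, k+1}\}$ for $j \le k$. The assumption gives $\bigcup_j S_j = D(v_{k+1})$. Take a minimal subcover. It cannot be a singleton $\{v_j\}$, since $S_j = D(v_{k+1})$ would mean $s(v_j)$ has no support at $v_{k+1}$, contradicting arc consistency of $\rho_{j, k+1}$ at $s(v_j)$. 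So the minimal cover contains two distinct indices, and minimality yields witnesses $a \in S_i \setminus S_j$ and $b \in S_j \setminus S_i$ with $a \ne b$. Reordering so that $v_i < v_j$ (swapping the roles of $a$ and $b$ if needed), the triple $(v_i, v_j, v_{k+1})$ with values $s(v_i)$, $s(v_j)$, and the two values $a, b$ at $v_{k+1}$ realises \btp\ exactly: the $(v_i, v_j)$ edge is allowed because $s$ is consistent, while on $v_{k+1}$ the value $a$ is forbidden with $s(v_i)$ but allowed with $s(v_j)$, and $b$ is allowed with $s(v_i)$ but forbidden with $s(v_j)$, matching the dashed/solid edges of Pattern~\ref{pat:btp}. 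This contradicts $P \in \cspneg{\btp}$.

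I expect the main obstacle to be checking the orientation of the pattern carefully: \btp\ is asymmetric in the roles of $v_1$ and $v_2$, so the matching of $(a, b)$ to the pattern's two values at $v_3$ must respect which earlier variable kills which value. A secondary but routine point is verifying that arc-consistency preprocessing does not introduce new occurrences of \btp; this holds because shrinking a domain can only destroy occurrences of a pattern whose forbidden tuples are already committed, never create them, so the hereditary property needed by the lemma is preserved. Once these are handled, the greedy loop runs in $O(n^2 d)$ time after arc consistency, yielding the claimed polynomial-time bound.
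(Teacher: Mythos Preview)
The paper does not give its own proof of this theorem; it simply cites \cite{Cooper2010:generalizing}. Your argument is correct and is essentially the standard proof from that reference: after enforcing arc consistency, one shows that any consistent partial assignment on a prefix $\{v_1,\ldots,v_k\}$ extends to $v_{k+1}$, since a failure to extend yields two earlier variables $v_i,v_j$ and two values at $v_{k+1}$ that together realise the forbidden broken triangle. Your minimal-subcover step correctly produces the required witnesses, the orientation concern you flag is harmless (the pattern is invariant under simultaneously swapping $v_1\leftrightarrow v_2$ and $a\leftrightarrow b$, as the domain carries no order in this context), and your observation that arc consistency cannot create new occurrences of \btp\ is valid because deleting domain values only removes, never adds, pattern occurrences.
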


It is easy to see that \tree\ (shown in Pattern~\ref{pat:tree}) occurs
in \btp\ (with some truth-values $U$ being changed to
$T$). It follows from Lemma~\ref{lem:embed} that $\cspneg{\tree}
\subseteq \cspneg{\btp}$. Hence the class $\cspneg{\btp}$ includes
all CSP instances whose constraint graph is a tree. However,
$\cspneg{\btp}$ also includes CSP instances whose constraint graph
has tree width $r$ for any value of $r$: consider, for example, a CSP
instance with $r+1$ variables and an identical constraint between
every pair of variables which simply disallows the single tuple
$\tuple{0,0}$.

For any tractable forbidden pattern relative to a context with an
order on the variables, we can obtain another tractable class by
considering problems forbidding the pattern in a \unstructured\
context. The class obtained is (possibly) smaller because it is
easier to establish containment of the \unstructured\ pattern. In the particular case of
the broken-triangle property, we obtain a strictly smaller tractable
class by forbidding Pattern~\ref{pat:btp} for all triples of
variables $v_{1},v_{2},v_{3}$ irrespective of their order. We can
easily exhibit a CSP instance that shows this inclusion to be strict:
for example, the 3-variable CSP instance over Boolean domains
consisting of the two constraints $\var{1}=\var{2}$,
$\var{1}=\var{3}$ with the variable ordering $\var{1} < \var{2} <
\var{3}$.

Our second example is a generalisation of the well-known tractable
class of problems,
\alldiff+\unary~\cite{Regin1994:filtering,vanHoeve2001:alldifferent}: an
instance of this class consists of a set of variables $V$, a set of
arbitrary unary constraints on $V$, and the constraint $v \neq w$
defined on each pair of distinct variables $v,w \in V$. We define a
more general class containing every such instance using the forbidden
pattern shown in Pattern~\ref{pat:negtrans}, which we call \negtrans.
Forbidding this pattern insists that disallowing tuples is a
transitive relation, i.e.~if $(\tuple{v,a},\tuple{x,b})$
and $(\tuple{x,b},\tuple{w,c})$ are disallowed then
$(\tuple{v,a},\tuple{w,c})$ must also be disallowed. By
the transitivity of equality, Pattern~\ref{pat:negtrans} does not
occur in any binary CSP instance in the class \alldiff+\unary.

\begin{pattern}\vspace{0.3cm}
\centering
\begin{tabular}{cc}
\begin{tikzpicture}
  \node at (0.5,4.2) {$v$};
  \path[draw,rounded corners=15pt] (0,4) rectangle (1,2.5);
  \fill (0.5,3.25) circle (0.1);

  \node at (2,1.7) {$w$};
  \path[draw,rounded corners=15pt] (1.5,1.5) rectangle (2.5,0);
  \fill (2,0.75) circle (0.1);

  \node at (3.5,3.2) {$x$};
  \path[draw,rounded corners=15pt] (3,3) rectangle (4,1.5);
  \fill (3.5,2.25) circle (0.1);

  \draw[dashed] (0.5,3.25) -- (3.5,2.25);
  \draw[dashed] (2,0.75) -- (3.5,2.25);
  \draw (2,0.75) -- (0.5,3.25);
\end{tikzpicture}
&\hspace{1cm}
\\
NEQ($v,w,x$)
&
\end{tabular}
\caption{Negative transitive pattern (\negtrans)}
\label{pat:negtrans}
\end{pattern}
\begin{theorem} \label{thm:negtrans}
Let \negtrans\  denote Pattern~\ref{pat:negtrans}. The class of
CSP instances forbidding \negtrans\ can be
solved in polynomial time. 
\end{theorem}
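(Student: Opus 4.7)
The plan is to reduce $\cspneg{\negtrans}$ to bipartite matching in polynomial time, generalising the classical matching algorithm for the $\alldiff+\unary$ class that the theorem subsumes.

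First, I would establish arc consistency on $P$. Since restricting domains cannot introduce new occurrences of any pattern, the reduced instance is still in $\cspneg{\negtrans}$ and has a solution if and only if $P$ does; if any domain becomes empty I return ``no solution.''

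Next, consider the \emph{incompatibility graph} $I$ whose vertices are assignments $(v,a)$ with $v\in V$, $a\in D_v$, and whose edges connect pairs $(v,a),(w,b)$ with $v\neq w$ and $(a,b)$ disallowed by the constraint between $v$ and $w$. The hypothesis that $P$ forbids \negtrans\ is exactly the statement that whenever $(v,a)-(x,b)-(w,c)$ is a length-two path in $I$ with pairwise distinct variables $v,w,x$, the endpoints $(v,a)$ and $(w,c)$ are themselves adjacent in $I$. Equivalently, for every vertex $(x,b)$ of $I$ the open neighborhood $N_I(x,b)$ induces a complete multipartite subgraph of $I$: any two of its elements on distinct variables (both different from $x$) are joined. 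Iterating this transitivity through third variables, a short induction on shortest-path length shows that every connected component of $I$ meeting at least three distinct variables is complete on distinct-variable pairs.

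Using this clique-like structure, I would reduce $P$ to a bipartite matching problem on an auxiliary graph $H$: the variables $V$ on one side, structural classes derived from the components of $I$ on the other, and $v$ joined to $K$ whenever some value in $D_v$ appears in $K$. Given a matching of $H$ saturating $V$, one recovers a solution of $P$ by selecting at each variable any value inside the matched class; completeness follows from the clique property, while soundness of the reverse direction follows from the fact that two assignments in distinct components of $I$ are compatible. Bipartite matching is solvable in polynomial time, e.g.\ via Hopcroft--Karp.

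The main obstacle will be handling connected components of $I$ spanning only two variables, since the \negtrans\ hypothesis is vacuous on such components (it requires three distinct variables) and so they need not be cliques on distinct-variable pairs. In such cases the naive matching above may reject a solvable instance, because a solution can legitimately assign both endpoint variables of a two-variable component to values lying inside the same component. Resolving this requires either splitting each two-variable component into finer classes that reflect the internal compatibility structure between the two involved variables, or augmenting the matching with a local consistency check between every pair of variables sharing a two-variable component. Establishing that this refined reduction remains both correct and polynomial-time is the technical heart of the proof.
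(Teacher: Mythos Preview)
Your approach is essentially the paper's: the paper builds the same incompatibility graph (there called the \emph{inconsistency graph}), proves the same dichotomy on its connected components (either a complete multipartite graph over the variables it meets, or it meets exactly two variables), and then reduces the clique part to $\alldiff+\unary$, i.e.\ bipartite matching.  Your sketch of the clique property is slightly looser than the paper's---the paper explicitly treats the case of a length-four path alternating between only two variables, which your ``induction on shortest-path length'' must also handle by routing through a vertex on a third variable---but the idea is the same.

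Where you diverge is in the handling of the two-variable components, and here you are overestimating the difficulty.  The paper's resolution is a one-line observation, not a refinement of the matching.  Suppose a component $H$ of $I$ meets only the variables $v$ and $w$ and is \emph{not} complete bipartite between them.  Then $H$ contains a non-adjacent pair $(v,a),(w,b)$.  Because $H$ is a connected component of $I$, neither $(v,a)$ nor $(w,b)$ has any incompatibility edge to an assignment on a variable other than $v$ or $w$.  Hence the partial assignment $v\mapsto a$, $w\mapsto b$ is consistent with every constraint of $P$, and you may simply fix these two variables, delete them, and continue.  After iterating, every remaining component is an inconsistency clique and your matching reduction goes through unchanged.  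Neither of your proposed fixes (splitting components into finer classes, or adding local consistency side-constraints to the matching) is needed, and it is not clear either would be straightforward to make correct.
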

\begin{proof}
We prove this by a straightforward reduction to the well-known
tractable problem \alldiff+\unary~\cite{Costa1994:persistency,Regin1994:filtering}.

Let $P=\tuple{V,D,C}$ be a binary CSP in which \negtrans\ does not
occur, and let $n=|V|$ and $d=|D|$. We define the graph $G_P$ which we
call the \emph{inconsistency graph of P}. The vertices of $G_P$ are
the pairs \tuple{v,c} where $v$ is a variable in $P$ and $c \in D$ is
allowed by the unary constraint on $v$. The edges of $G_P$ are the
pairs of vertices $\{\tuple{v,a},\tuple{w,b}\}$ of $G_P$ for which
there exists a constraint $\tuple{\tuple{v,w},\rho}$ with scope
$\tuple{v,w}$ such that $\tuple{a,b} \not \in \rho$. (The 
inconsistency graph is the microstructure
complement~\cite{Jegou1993:decomposition} without edges between pairs
of assignments to the same variable.)

We first prove that, for any connected component  $H$ of $G_P$, either
\begin{itemize}
\item The subgraph of $G_P$ induced by $H$ is a complete
  multipartite graph with edges $\{\tuple{v,a},\tuple{w,b}\}$ for each
  $\tuple{v,a},\tuple{w,b} \in H$ satisfying $v \neq w$ (in this
  case, we call H an \emph{inconsistency clique}), or
\item $H$ meets exactly two variables of $P$: $|\{v \mid \tuple{v,a}
  \in H\}| = 2$.
\end{itemize}

Any connected component $H$ that meets only one variable is a trivial
inconsistency clique. Consider a component $H$ that meets at least
three distinct variables. To show that $H$ is an inconsistency clique
we have only to show that the two end-points of every path of length
three meeting three variables and of every path of length four
meeting two variables are connected. (The length of a path is the
number of vertices on the path).

Let $\tuple{v,a},\tuple{x,c},\tuple{w,b}$, be a path of length three
in $H$, where $v,x,w$ are distinct variables. Since \negtrans\ does
not occur in $P$, we must have a constraint
$\tuple{\tuple{v,w},\rho}$ with $\tuple{a,b} \not \in \rho$.

Let $\tuple{v,a},\tuple{w,a'},\tuple{v,b'},\tuple{w,b}$ be a path of
length four in $H$.  Since $H$ is connected and comprises at least
three variables, there is some other vertex \tuple{x,c} with $x
\not\in \{v,w\}$ which is connected to this path. Since this creates
paths of length three through three variables we can repeatedly use
the argument given above to show that \tuple{x,c} is adjacent to each
of the four vertices on the path. Finally, since it is adjacent to
both \tuple{v,a} and \tuple{w,b} we use the argument one last time to
show that these two vertices are adjacent.

We can now demonstrate the reduction to \alldiff+\unary.  First we can
identify all connected components of $G_P$ in polynomial time.  For
each component $H$ that is not an inconsistency clique, $H$ meets
exactly two variables $v,w$ and there is some pair of vertices
\tuple{v,a} and \tuple{w,b} which are not adjacent and which are
adjacent to no vertex $\tuple{x,c}$ for any other variable $x$. We
can therefore make the assignments $v=a,w=b$ and remove from $G_P$ all vertices
corresponding to assignments to these variables. We denote
by $V'$ the remaining set of variables after removing each such pair
of variables from $P$. Note that we have an assignment to every
variable not in $V'$ that is consistent with any assignment to the
variables of $V'$.

Let $P_{V'}$ denote the resulting CSP instance on variables $V'$ and
$G_{P_{V'}}$ the corresponding inconsistency graph. The components
$H_{1},\ldots,H_{m}$ of $G_{P_{V'}}$ are all inconsistency cliques.
For each component $H_{i}$ and each variable $v$ we define $H_{i}(v)
= \{\tuple{w,c} \in H_{i} \mid w = v\}$.

Consider a CSP $P'$ with variables $V'$ and domain $\{1,\ldots,m\}$.
Apply the unary constraint on each variable $v$ of $P'$ given by the
unary relation $\{\tuple{i} \mid H_i(v) \neq \emptyset\}$. Finally
apply the \alldiff\ constraint over all variables of $P'$.

No solution to $P_{V'}$ can contain two assignments from the same
component of $G_{P_{V'}}$. Therefore, to every solution $s$ to
$P_{V'}$ there is a corresponding solution $s'$ to $P'$: choose $s'(v)
= i$ where $\tuple{v,s(v)} \in H_i$.

Conversely, any solution $s'$ to $P'$ corresponds to a solution
$s$ to $P_{V}$ by choosing $s(v)$ to be any value in $H_{s'(v)}(v)$,
for each $v \in V$.

The time taken to obtain $G_{P_{V'}}$ from $P$ is proportional to the
total number of disallowed tuples in $P$; hence, the time taken is
$O(|C|d^2)$. Solving $P'$ is equivalent to finding a perfect matching
in a bipartite graph with $|V'| + m$ vertices and up to $|V'|m$
edges. Using the {\em Fibonacci heap} data structure, we can find a
perfect matching in a bipartite graph with $N$ vertices and $M$ edges
in time $O(N^2\log(N) + NM)$~\cite{Fredman1987:fibonacci}. Thus, we can
find a solution to $P'$ in time $O((n+m)^2\log(n+m) + (n+m)nm)\,.$ The
maximum value of $m$ occurs when each component of $G_{P}$ contains
exactly three assignments, so we will always have $m \leq
\frac{nd}{3}$. Thus, under the reasonable assumption that $d \leq n$,
we can solve $P'$ in time $O(n^3d^2)$. Since $|C|$ is $O(n^2)$, it
follows that $P$ can be solved in time $O(n^3d^2)$.    
\end{proof}

It has recently been shown \cite{Cooper2010:hybrid} that the tractable
class defined by forbidding Pattern~\ref{pat:negtrans} (\negtrans)
can be extended to soft constraint problems but that this is not the
case for the class of problems obtained by forbidding
Pattern~\ref{pat:btp} (\btp) (in the sense that the class becomes
NP-hard if all unary soft constraints are also allowed).

Having demonstrated through the \btp\ and \negtrans\ patterns that
forbidding patterns provides a language which is rich enough to
define interesting hybrid tractable classes, we concentrate in the
rest of the paper on progress towards characterising tractable
forbidden patterns.

\section{Binary \unstructured\ negative patterns}
\label{sec:pivot}

In this section we define three particular patterns and one
infinite class of patterns.  We then use these patterns to
characterise a very large class of intractable patterns.  We
prove that any finite set of patterns not in this class has a simple
structure: one of the patterns must contain one of a particular set of
patterns, which we call {\em pivots}. This means that any tractable
finite set of patterns must include a pattern in which a pivot pattern
occurs.  Furthermore, we exhibit a class of patterns which are contained in
pivots and which we are able to prove give rise to a tractable
class. We conjecture that pivots are also tractable; if this is the
case then it implies a simple characterisation of the tractability
of finite sets of binary \unstructured\ {\em negative} patterns.

\begin{definition}
A constraint pattern \tuple{\sigma,\rho} will be called
\textbf{negative} if $\rho$ never takes the value $T$.  A CSP pattern $\chi$
is negative if every constraint pattern in $\chi$ is negative.
\end{definition}

\begin{pattern}\vspace{0.3cm}
\centering
\begin{tikzpicture}
  \draw[rounded corners=15pt] (0,3.5) rectangle (1,2);
  \fill (0.5,2.5) circle (0.1);
  \fill (0.5,3) circle (0.1);
  \node at (0.5,1.7) {$\var{1}$};
  \node at (-0.5,2.5) {$c$};
  \node at (-0.5,3) {$c'$};

  \draw[rounded corners=15pt] (2.5,3.5) rectangle (3.5,2);
  \fill (3,2.5) circle (0.1);
  \fill (3,3) circle (0.1);
  \node at (3,1.7) {$\var{2}$};

  \draw[rounded corners=15pt] (5,3.5) rectangle (6,2);
  \fill (5.5,2.5) circle (0.1);
  \fill (5.5,3) circle (0.1);
  \node at (5.5,1.7) {$\var{3}$};

  \draw[rounded corners=15pt] (1.5,1.5) rectangle (2.5,0);
  \fill (2,0.5) circle (0.1);
  \fill (2,1) circle (0.1);
  \node at (2,-0.3) {$\var6$};

  \draw[rounded corners=15pt] (4,1.5) rectangle (5,0);
  \fill (4.5,0.5) circle (0.1);
  \fill (4.5,1) circle (0.1);
  \node at (4.5,-0.3) {$\var5$};

  \draw[rounded corners=15pt] (6.5,1.5) rectangle (7.5,0);
  \fill (7,0.5) circle (0.1);
  \fill (7,1) circle (0.1);
  \node at (7,-0.3) {$\var4$};

  \draw[dashed] (0.5,3) -- (3,2.5);
  \draw[dashed] (3,3) -- (5.5,2.5);
  \draw[dashed] (5.5,3) -- (7,1);
  \draw[dashed] (7,0.5) -- (4.5,1);
  \draw[dashed] (4.5,0.5) -- (2,1);
  \draw[dashed] (2,0.5) -- (0.5,2.5);
\end{tikzpicture}

Context: $NEQ(\var{1},\ldots,\var{6})$
\caption{\Cycle(6)}
\label{pat:cycle}
\end{pattern}

\begin{pattern}\vspace{0.3cm}
\centering
\begin{tikzpicture}
  \node at (0,1) {$\varc{3}$};
  \draw (1,1) circle (0.5);
  \fill (1,1) circle (0.1);

  \node at (0,3) {$\varc{2}$};
  \draw (1,3) circle (0.5);
  \fill (1,3) circle (0.1);

  \node at (0,5) {$\varc{1}$};
  \draw (1,5) circle (0.5);
  \fill (1,5) circle (0.1);

  \draw[rounded corners=15pt] (2.5,4.5) rectangle (3.5,1.5);
  \fill (3,2) circle (0.1);
  \fill (3,3) circle (0.1);
  \fill (3,4) circle (0.1);

  \draw[dashed] (1,1) -- (3,2);
  \draw[dashed] (1,3) -- (3,3);
  \draw[dashed] (1,5) -- (3,4);

  \node at (8,1) {$\varc{3}'$};
  \draw (7,1) circle (0.5);
  \fill (7,1) circle (0.1);

  \node at (8,3) {$\varc{2}'$};
  \draw (7,3) circle (0.5);
  \fill (7,3) circle (0.1);

  \node at (8,5) {$\varc{1}'$};
  \draw (7,5) circle (0.5);
  \fill (7,5) circle (0.1);

  \draw[rounded corners=15pt] (4.5,4.5) rectangle (5.5,1.5);
  \fill (5,2) circle (0.1);
  \fill (5,3) circle (0.1);
  \fill (5,4) circle (0.1);

  \draw[dashed] (5,2) -- (7,1);
  \draw[dashed] (5,3) -- (7,3);
  \draw[dashed] (5,4) -- (7,5);
\end{tikzpicture}
\vspace{0.5cm}

Context: $NEQ(\varc{1},\varc{2},\varc{3},\varc{1}') \wedge
NEQ(\varc{1}',\varc{2}',\varc{3}')$
\caption{\Valency}
\label{pat:valency}
\end{pattern}

\begin{pattern}\vspace{0.3cm}
\centering
\vspace{0.3cm}
\begin{tikzpicture}
  \draw (0.5,1) circle (0.5);
  \fill (0.5,1) circle (0.1);
  \node at (0.5,0) {$\var{1}$};

  \draw (0.5+1.5,1) circle (0.5);
  \fill (0.5+1.5,1) circle (0.1);
  \node at (0.5+1.5,0) {$\var{2}$};

  \draw (0.5+2*1.5,1) circle (0.5);
  \fill (0.5+2*1.5,1) circle (0.1);
  \node at (0.5+2*1.5,0) {$\var{3}$};

  \draw[dashed] (0.5,1) -- (0.5+1.5,1) -- (0.5+2*1.5,1);

  \draw (0.5+5,1) circle (0.5);
  \fill (0.5+5,1) circle (0.1);
  \node at (0.5+5,0) {$\varb{1}$};

  \draw (0.5+1.5+5,1) circle (0.5);
  \fill (0.5+1.5+5,1) circle (0.1);
  \node at (0.5+1.5+5,0) {$\varb{2}$};

  \draw (0.5+2*1.5+5,1) circle (0.5);
  \fill (0.5+2*1.5+5,1) circle (0.1);
  \node at (0.5+2*1.5+5,0) {$\varb{3}$};

  \draw[dashed] (0.5+5,1) -- (0.5+1.5+5,1) -- (0.5+2*1.5+5,1);
\end{tikzpicture}
\vspace{0.5cm}\\
$NEQ(\var{1},\var{2},\var{3},\varb{1}) \wedge
NEQ(\varb{1},\varb{2},\varb{3})$
\caption{\Path} \label{pat:path}
\end{pattern}

\begin{pattern}
\centering
\begin{tikzpicture}
 \node at (0,1) {$\var{3}$};
  \draw (1,1) circle (0.5);
  \fill (1,1) circle (0.1);

  \node at (0,3) {$\var{2}$};
  \draw (1,3) circle (0.5);
  \fill (1,3) circle (0.1);

  \node at (0,5) {$\var{1}$};
  \draw (1,5) circle (0.5);
  \fill (1,5) circle (0.1);

  \draw[rounded corners=15pt] (2.5,4.5) rectangle (3.5,1.5);
  \fill (3,2) circle (0.1);
  \fill (3,3) circle (0.1);
  \fill (3,4) circle (0.1);
  \node at (3,1.2) {$x$};

  \draw[dashed] (1,1) -- (3,2);
  \draw[dashed] (1,3) -- (3,3);
  \draw[dashed] (1,5) -- (3,4);

  \draw (0.5+5,1+2) circle (0.5);
  \fill (0.5+5,1+2) circle (0.1);
  \node at (0.5+5,0+2) {$\varb{1}$};

  \draw (0.5+1.5+5,1+2) circle (0.5);
  \fill (0.5+1.5+5,1+2) circle (0.1);
  \node at (0.5+1.5+5,0+2) {$\varb{2}$};

  \draw (0.5+2*1.5+5,1+2) circle (0.5);
  \fill (0.5+2*1.5+5,1+2) circle (0.1);
  \node at (0.5+2*1.5+5,0+2) {$\varb{3}$};

  \draw[dashed] (0.5+5,1+2) -- (0.5+1.5+5,1+2) -- (0.5+2*1.5+5,1+2);
\end{tikzpicture}

$NEQ(\var{1},\var{2},\var{3})$, $NEQ(\varb{1},\varb{2},\varb{3})$, and
$x \neq \varb{2}$

\caption{\Valency+\Path}
\label{pat:valpath}
\end{pattern}

In Definition~\ref{defn:conpat} below, we define the concept of a
connected negative binary pattern. 
These correspond to negative binary patterns $\chi$ such that every realisation
of $\chi$ as a binary CSP instance has a connected constraint graph. We
first generalise the notion of constraint graph to CSP patterns. We
call the resulting graph the negative structure graph.
\begin{definition}\label{defn:conpat}
Let $\chi$ be any binary negative
pattern.  The vertices of the \defterm{negative structure graph} $G$
are the variables of $\chi$.  A pair of vertices is connected in $G$
if and only if they form a scope in $\chi$ whose constraint pattern
assigns at least one tuple the value $F$. We say that $\chi$ is
\defterm{connected} if its negative structure graph is connected.
\end{definition}

For example, Pattern~\ref{pat:valency} (\Valency),
Pattern~\ref{pat:path} (\Path) and Pattern~\ref{pat:valpath}
(\Valency+\Path) are not connected. Note that a pattern which is not
connected may occur in a connected pattern (and vice versa).
Pattern~\ref{pat:cycle} shows \Cycle(6) which is connected. This is just one example of
the generic pattern \Cycle($k$) where $k \geq 2$. The only context
for \Cycle($k$) is that all variables are distinct, except for the
special case $k=2$ for which the context also includes $c \neq c'$,
meaning that \Cycle(2) is composed of a single binary constraint
pattern containing two {\em distinct} inconsistent tuples. The
following theorem uses these patterns to show that most patterns are
intractable.

\begin{theorem}
\label{thm:intractable}
Let $\mathcal X$ be any finite set of connected negative binary
patterns. If at least one of $\Cycle(k)$ (for some $k \geq 2$),
\Valency, \Path, or \Valency$+$\Path\ occurs in each $\chi \in \mathcal
X$ then \cspneg{\mathcal X} is intractable.
\end{theorem}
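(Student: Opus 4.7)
The plan is to apply the Corollary to Lemma~\ref{lem:embed} in order to reduce the theorem to a hardness statement about the basic patterns themselves, and then to prove that hardness by exhibiting explicit NP-hardness reductions. For every $\chi \in \mathcal{X}$, by hypothesis there is a basic pattern $\beta(\chi) \in \{\Cycle(k) : k \geq 2\} \cup \{\Valency, \Path, \Valency+\Path\}$ that occurs in $\chi$; let $\mathcal{B} = \{\beta(\chi) : \chi \in \mathcal{X}\}$. Since $\mathcal{X}$ is finite, $\mathcal{B}$ is finite too and mentions only finitely many cycle lengths; let $K$ denote the maximum such length. Applying the Corollary with $\mathcal{B}$ playing the role of ``$\mathcal{X}$'' and $\mathcal{X}$ playing the role of ``$\mathcal{T}$'' reduces the statement to showing that $\cspneg{\mathcal{B}}$ is NP-hard for every such finite $\mathcal{B}$.

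To establish NP-hardness of $\cspneg{\mathcal{B}}$ I would reduce from $3$-colouring restricted to graphs of bounded maximum degree and girth strictly greater than $K$, which remains NP-hard by a standard girth-boosting construction. The reduction sends each vertex to a CSP variable of domain $\{0,1,2\}$ and each edge to a binary constraint whose disallowed tuples enforce disequality, possibly modulated through a small gadget. The girth condition on the source graph forces the negative structure graph of the output CSP to contain no cycle of length $k$ for $k \leq K$, because two distinct gadgets can share at most a single source-vertex and so any short negative cycle would descend to a short cycle in the source graph. The bounded-degree condition controls how many disallowed tuples can meet any single variable-value assignment, which is precisely the local quantity constrained by $\Valency$, $\Path$, and $\Valency+\Path$.

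The main obstacle is the edge-gadget itself: it must be simultaneously expressive enough to preserve hardness of $3$-colouring while sparse enough, together with the global structure, to avoid every non-cycle pattern in $\mathcal{B}$. I expect a case analysis on which of $\Valency$, $\Path$, and $\Valency+\Path$ lies in $\mathcal{B}$. For each case one chooses a gadget whose disallowed-tuples relation cannot, when assembled into a graph of large girth and bounded degree, realise the forbidden local configuration; the subtlest case is $\Valency+\Path$ since it mixes a degree-type condition with a path condition and so the gadget and the global constraints must be balanced together. Combining these pieces yields a polynomial-time reduction witnessing intractability of $\cspneg{\mathcal{B}}$, and hence, by the Corollary, of $\cspneg{\mathcal{X}}$.
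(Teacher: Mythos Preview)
Your opening reduction from $\mathcal{X}$ to the set $\mathcal{B}$ of basic patterns via the Corollary to Lemma~\ref{lem:embed} is formally valid, but it discards the hypothesis that each $\chi\in\mathcal{X}$ is \emph{connected}, and after that the plan cannot be completed: the statement you are left to prove is in general false. Three of the four basic patterns (\Valency, \Path, \Valency{+}\Path) are themselves disconnected, and forbidding a disconnected pattern globally is far more restrictive than forbidding a connected pattern that merely contains it. For a concrete failure, take $\mathcal{X}=\{\chi\}$ where $\chi$ is a four-leaf star: a hub with four distinct domain values, four leaves, and one inconsistent tuple on each hub--leaf edge using a different hub value. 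This $\chi$ is connected, $\Valency\occursin\chi$, and none of $\Cycle(k)$, \Path, \Valency{+}\Path\ occur in $\chi$; hence your $\mathcal{B}=\{\Valency\}$. But every instance in $\cspneg{\Valency}$ has a negative structure graph with at most one vertex of degree~$3$ and none of degree $\geq 4$, hence tree-width at most~$2$, so $\cspneg{\Valency}$ is \emph{tractable}. No hardness reduction into $\cspneg{\Valency}$ can exist, yet Theorem~\ref{thm:intractable} does apply to this $\mathcal{X}$.

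The paper's proof works directly with $\mathcal{X}$ and exploits connectedness throughout. It reduces from 3SAT, building an instance $P_\Psi$ in which every constraint carries a single inconsistent tuple and in which valency-$3$ variables and meeting points of inconsistent tuples do occur, but any two such features are separated by a path of length exceeding $\ell=\max_{\chi\in\mathcal{X}}|\chi|$. Because each $\chi$ is connected on at most $\ell$ variables, an occurrence of $\chi$ in $P_\Psi$ would force the two halves of \Valency\ (or \Path, or \Valency{+}\Path) to lie within distance $\ell$ of one another in $P_\Psi$, which the construction rules out. So $P_\Psi$ may well contain \Valency\ as a disconnected pattern---indeed it does---but it contains no connected $\chi$ in which \Valency\ occurs. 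Your remark that bounded degree ``controls how many disallowed tuples can meet any single variable-value assignment'' reflects the same misreading: the non-cycle basic patterns do not bound a single local quantity but forbid \emph{two} separate complex features, and it is their separation, not a degree bound, that the hardness construction must control.
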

\begin{proof}
Let $\mathcal X$ be a set of connected negative binary patterns and let $\ell$ be
the number of variables in the largest member of $\mathcal X$.

Assuming at least one of the four patterns occurs in each $\chi \in
\mathcal X$, we can construct a class of CSPs in which no element
of $\mathcal X$ occurs and to which we have a polynomial reduction from
the well-known NP-complete problem
3SAT~\cite{Garey1979:computers}.

The construction will involve three gadgets, examples of which are
shown in Figure~\ref{fig3SAT}. These gadgets each serve a particular
purpose:
\begin{enumerate}%
\item  The {\em cycle gadget}, given in Figure~\ref{fig3SAT}(a) for the special
  case of $4$ variables, enforces that a cycle of Boolean variables
  $(\var{1},\var{2},\ldots,\var{r})$ all take the same value.
\item The {\em clause gadget} in Figure~\ref{fig3SAT}(b) is equivalent to the clause
  $\var{1} \vee \var{2} \vee \var{3}$, since $\var{C}$ has a value in
  its domain if and only if one of the three $\var{i}$ variables is
  set to true. We can obtain all other 3-clauses on these three
  variables by inverting the domains of the $\var{i}$ variables.
\item The {\em line gadget} in Figure~\ref{fig3SAT}(c), imposes the constraint $\var{1}
  \Rightarrow \var{2}$. It can also be used to impose the logically
  equivalent constraint $\neg \var{2} \Rightarrow \neg \var{1}$.
\end{enumerate}

Now, suppose that we have an instance of 3SAT with $n$ propositional
variables $X_1,\ldots,X_n$ and $m$ clauses $C_1,\ldots,C_m$.

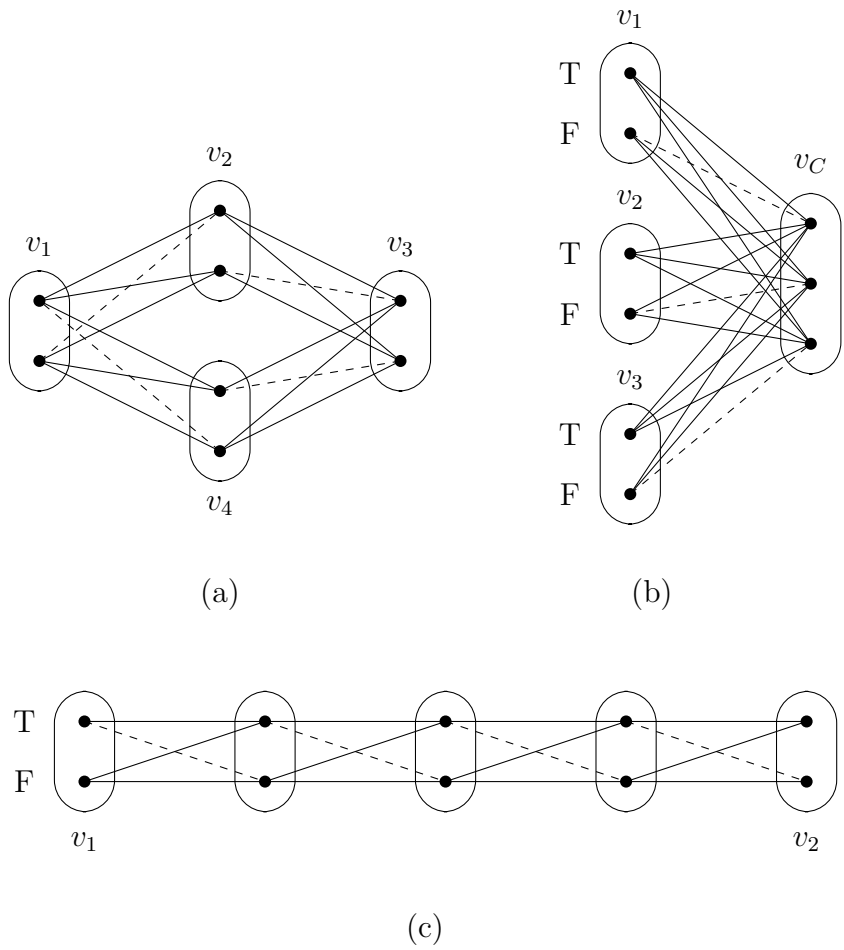
\begin{figure}
\centering
\begin{tabular}{cc}
  \begin{tikzpicture}[scale=0.8]
    \draw[rounded corners=12pt] (0,4.5) rectangle (1,2.5);
    \node at (0.5,4.9) {$\var{1}$};
    \fill (0.5,3) circle (0.1);
    \fill (0.5,4) circle (0.1);

    \draw[rounded corners=12pt] (3,6) rectangle (4,4);
    \node at (3.5,6.4) {$\var{2}$};
    \fill (3.5,4.5) circle (0.1);
    \fill (3.5,5.5) circle (0.1);

    \draw[rounded corners=12pt] (6,4.5) rectangle (7,2.5);
    \node at (6.5,4.9) {$\var{3}$};
    \fill (6.5,3) circle (0.1);
    \fill (6.5,4) circle (0.1);

    \draw[rounded corners=12pt] (3,3) rectangle (4,1);
    \node at (3.5,0.6) {$\var4$};
    \fill (3.5,1.5) circle (0.1);
    \fill (3.5,2.5) circle (0.1);

    \draw[dashed] (0.5,3) -- (3.5,5.5);
    \draw[dashed] (3.5,4.5) -- (6.5,4);
    \draw[dashed] (6.5,3) -- (3.5,2.5);
    \draw[dashed] (3.5,1.5) -- (0.5,4);
    \draw (0.5,3) -- (3.5,1.5);
    \draw (0.5,3) -- (3.5,2.5);
    \draw (0.5,4) -- (3.5,2.5);
    \draw (0.5,3) -- (3.5,4.5);
    \draw (0.5,4) -- (3.5,4.5);
    \draw (0.5,4) -- (3.5,5.5);
    \draw (6.5,3) -- (3.5,1.5);
    \draw (6.5,4) -- (3.5,1.5);
    \draw (6.5,4) -- (3.5,2.5);
    \draw (6.5,3) -- (3.5,4.5);
    \draw (6.5,3) -- (3.5,5.5);
    \draw (6.5,4) -- (3.5,5.5);
  \end{tikzpicture}

  & \hspace{1cm}

  \begin{tikzpicture}[scale=0.8]
    \node at (1,8.4) {$\var{1}$};
    \draw[rounded corners=12pt] (0.5,8) rectangle (1.5,6);
    \fill (1,7.5) circle (0.1);
    \node at (0,7.5) {T};
    \fill (1,6.5) circle (0.1);
    \node at (0,6.5) {F};

    \node at (1,5.4) {$\var{2}$};
    \draw[rounded corners=12pt] (0.5,5) rectangle (1.5,3);
    \fill (1,4.5) circle (0.1);
    \node at (0,4.5) {T};
    \fill (1,3.5) circle (0.1);
    \node at (0,3.5) {F};

    \node at (1,2.4) {$\var{3}$};
    \draw[rounded corners=12pt] (0.5,2) rectangle (1.5,0);
    \fill (1,1.5) circle (0.1);
    \node at (0,1.5) {T};
    \fill (1,0.5) circle (0.1);
    \node at (0,0.5) {F};

    \node at (4,6) {$\var{C}$};
    \draw[rounded corners=12pt] (3.5,5.5) rectangle (4.5,2.5);
    \fill (4,5) circle (0.1);
    \fill (4,4) circle (0.1);
    \fill (4,3) circle (0.1);

    \draw[dashed] (1,6.5) -- (4,5);
    \draw[dashed] (1,3.5) -- (4,4);
    \draw[dashed] (1,0.5) -- (4,3);
    \draw (1,0.5) -- (4,4);
    \draw (1,0.5) -- (4,5);
    \draw (1,1.5) -- (4,3);
    \draw (1,1.5) -- (4,4);
    \draw (1,1.5) -- (4,5);
    \draw (1,3.5) -- (4,3);
    \draw (1,3.5) -- (4,5);
    \draw (1,4.5) -- (4,3);
    \draw (1,4.5) -- (4,4);
    \draw (1,4.5) -- (4,5);
    \draw (1,6.5) -- (4,3);
    \draw (1,6.5) -- (4,4);
    \draw (1,7.5) -- (4,3);
    \draw (1,7.5) -- (4,4);
    \draw (1,7.5) -- (4,5);
  \end{tikzpicture}

  \\\\
  (a) & (b)
\end{tabular}

\vspace{1cm}

\begin{tikzpicture}[scale=0.8]
  \node at (0,1) {F};
  \node at (0,2) {T};

  \draw[rounded corners=12pt] (0.5,2.5) rectangle (1.5,0.5);
  \fill (1,1) circle (0.1);
  \fill (1,2) circle (0.1);
  \node at (1,0) {$\var{1}$};

  \draw[rounded corners=12pt] (0.5+3,2.5) rectangle (1.5+3,0.5);
  \fill (1+3,1) circle (0.1);
  \fill (1+3,2) circle (0.1);

  \draw[rounded corners=12pt] (0.5+6,2.5) rectangle (1.5+6,0.5);
  \fill (1+6,1) circle (0.1);
  \fill (1+6,2) circle (0.1);

  \draw[rounded corners=12pt] (0.5+9,2.5) rectangle (1.5+9,0.5);
  \fill (1+9,1) circle (0.1);
  \fill (1+9,2) circle (0.1);

  \draw[rounded corners=12pt] (0.5+12,2.5) rectangle (1.5+12,0.5);
  \fill (1+12,1) circle (0.1);
  \fill (1+12,2) circle (0.1);
  \node at (1+12,0) {$\var{2}$};
   \draw (1,1) -- (4,2);
   \draw (4,1) -- (7,2);
   \draw (7,1) -- (10,2);
   \draw (10,1) -- (13,2);
   \draw (1,1) -- (4,1);
   \draw[dashed] (1,2) -- (4,1);
   \draw (1,2) -- (4,2);
   \draw (4,1) -- (7,1);
   \draw[dashed] (4,2) -- (7,1);
   \draw (4,2) -- (7,2);
   \draw (7,1) -- (10,1);
   \draw[dashed] (7,2) -- (10,1);
   \draw (7,2) -- (10,2);
   \draw (10,1) -- (13,1);
   \draw[dashed] (10,2) -- (13,1);
   \draw (10,2) -- (13,2);
\end{tikzpicture}
\vspace{5mm}
\\
(c)
\caption{(a) Making copies of the same variable
  ($\var{1}=\var{2}=\var{3}=\var{4}$). (b) Imposing the ternary
  constraint $\var{C} = \var{1} \vee \var{2} \vee \var{3}$. (c) A line
  of constraints which imposes $\var{1} \Rightarrow \var{2}$.}
\label{fig3SAT}
\end{figure}
We begin to build a CSP instance $\Psi$ to solve this 3SAT instance by using
$n$ copies of the cycle gadget (Figure \ref{fig3SAT}(a)), each
with $m(\ell+1)$ variables. For $i=1,\ldots,n$, the variables along the
$i$th copy of this cycle are denoted by
$(\var{i}^1,\var{i}^2,\ldots,\var{i}^{m(\ell+1)})$.
In any solution to a CSP instance $P_\Psi$ with these and other
constraints, we will have that the variables $\var{i}^{j},
j=1,\ldots,m(\ell+1)$ must all have the same value, $d_i$. We can therefore 
consider each $\var{i}^j$ as a copy of $X_i$.

Consider the clause $C_w$.  There are eight cases to consider but
they are all very similar so we will show the details for just one
case. Suppose that $C_w \equiv X_i \vee X_j \vee \neg X_k$. We build the
clause gadget (Figure \ref{fig3SAT}(b)) with the three Boolean
variables being $c_w^i, c_w^j$ and $c_w^k$ and invert the domain of
$c_w^k$ since it occurs negatively in $C_w$.  Then any solution $s$ to
our constructed CSP must satisfy $s(c_w^i) \vee s(c_w^j) \vee \neg
s(c_w^k) = T$.

We complete the insertion of $C_w$ into the CSP instance by adding
some length $\ell+1$ line constructions (Figure \ref{fig3SAT}(c)).  We
connect the cycle gadgets corresponding to
$X_{i}$, $X_{j}$ and $X_{k}$ to the clause gadget for
clause $C_w$ since $X_{i},X_{j}$ and $X_{k}$ occur in $C_w$.  We
connect $\var{i}^{w(\ell+1)}$ to $c_w^i$ since $X_i$ is positive in
$C_w$, so $s(c_w^i)=T$ is only possible when $s(\var{i}^{w(\ell+1)})=T$,
for any solution $s$.
Similarly, we connect $\var{j}^{w(\ell+1)}$ to $c^j_w$. Finally, since
$X_k$ occurs negatively in $C_w$, we impose the line constraints in
the other direction. This ensures that $s(c_w^k)=F$ is only possible
when $s(\var{k}^{w(\ell+1)})=F$. Imposing these constraints ensures
that a solution is only possible when at least one of the cycles
corresponding to variables $X_i$, $X_j$, and $X_k$ is assigned a value
that would make the corresponding literal in $C_w$ true.

We continue this construction for each clause of the 3SAT instance.
Since $\ell$ is a constant, this is clearly a polynomial reduction from
3SAT.

We now show that any CSP instance $P_\Psi$ constructed in the manner we have
just described cannot contain any pattern in $\mathcal X$. We do this by
showing that no pattern containing \Cycle(k) (for $2 \leq k \leq \ell$), \Valency, \Path, or
\Valency+\Path\ can occur in the instance. This
is sufficient to show that the CSP $P_\Psi$ does not contain any
of the patterns in $\mathcal X$.

In the CSP $P_\Psi$ no constraint contains more than one inconsistent
tuple. Thus, any $\chi \in \mathcal X$ for which $\Cycle(2) \occursin
\chi$ cannot occur in $P_\Psi$. Furthermore, $P_\Psi$ is built from
cycles of length $m(\ell+1)$ and paths of length $\ell+1$, and so cannot
contain any cycles on less than $\ell+1$ vertices. Thus, since $\ell$ is
the maximum number of vertices in any component of $\mathcal X$, it
follows that no $\chi \in \mathcal X$ for which $\Cycle(k) \occursin
\chi$, for any $k \geq 3$, can occur in $P_\Psi$.

We define the \emph{valency} of a variable $x$ to be the number of
distinct variables which share a constraint with $x$. Suppose
$\Valency\ \occursin \chi$. For this to be possible we require that
there is a variable of valency four in $\chi$, or a pair of variables
of valency three connected by a path of length at most $\ell$. Certainly
$P_\Psi$ has no variables of valency four. Moreover, the fact that
$P_\Psi$ was built using paths of length $\ell+1$ means that no two of
its valency three variables are joined by a path of length at most
$\ell$. Thus, any $\chi \in \mathcal X$ for which $\Valency \occursin \chi$
does not occur in $P_\Psi$.

Next, consider that case when $\Path \occursin \chi$. Here
$\chi$ must have two distinct (but possibly overlapping) three-variable lines
(with inconsistent tuples in these constraints that match at domain
values) separated by at most
$\ell$ variables. The only place where inconsistent tuples can meet in $P_\Psi$
is when we connect the line gadget to the cycle gadget. These
connection sites are always at distance greater than $\ell$, so we can
conclude that $\chi \notoccursin P_{\psi}$ whenever $\Path \occursin
\chi$.

Finally, consider the case where $\Valency\mbox{+}\Path \occursin
\chi$. Here, $\chi$ must have one variable of valency at least $3$
and a path of constraints on three variables with intersecting
negative tuples, and these must be connected by a path of less than
$\ell$ variables. As observed above, the only places where we can have
inconsistent tuples meeting is where the line gadget meets the cycle
gadget, and there is a path of at least $\ell$ variables between each
one of these points and every other variable of valency $3$. Thus,
$\chi \notoccursin P_{\psi}$ whenever $\Valency\mbox{+}\Path
\occursin \chi$.
\end{proof}

\begin{pattern}
\vspace{0.3cm}
\centering
\begin{tikzpicture}[scale=0.8]
  \node at (0.5,6.3) {$v_3$};
  \draw[rounded corners=12pt] (0,6) rectangle (1,4);
  \fill (0.5,5.5) circle (0.1);
  \fill (0.5,4.5) circle (0.1);
  \draw[dashed] (0.5,4.5) -- (2.25+0.5,5.5);

  \node at (2.75,6.2) {$v_2$};
  \draw[rounded corners=12pt] (2.25,6) rectangle (2.25+1,4);
  \fill (2.25+0.5,5.5) circle (0.1);
  \fill (2.25+0.5,4.5) circle (0.1);
  \draw[dashed] (2.25+0.5,4.5) -- (2*2.25+0.5,5.5);

  \node at (5,6.2) {$v_1$};
  \draw[rounded corners=12pt] (4.5,6) rectangle (4.5+1,4);
  \fill (2*2.25+0.5,5.5) circle (0.1);
  \fill (2*2.25+0.5,4.5) circle (0.1);
  \draw[dashed] (2*2.25+0.5,4.5) -- (3*2.25+0.5,3.5);

  \node at (0.5,2.2) {$x_3$};
  \draw[rounded corners=12pt] (0,2) rectangle (1,0);
  \fill (0.5,1.5) circle (0.1);
  \fill (0.5,0.5) circle (0.1);
  \draw[dashed] (0.5,0.5) -- (2.25+0.5,1.5);

  \node at (2.75,2.2) {$x_2$};
  \draw[rounded corners=12pt] (2.25,2) rectangle (2.25+1,0);
  \fill (2.25+0.5,1.5) circle (0.1);
  \fill (2.25+0.5,0.5) circle (0.1);
  \draw[dashed] (2.25+0.5,0.5) -- (2*2.25+0.5,1.5);

  \node at (5,2.2) {$x_1$};
  \draw[rounded corners=12pt] (4.5,2) rectangle (4.5+1,0);
  \fill (2*2.25+0.5,1.5) circle (0.1);
  \fill (2*2.25+0.5,0.5) circle (0.1);
  \draw[dashed] (2*2.25+0.5,0.5) -- (3*2.25+0.5,2.5);

  \node at (3*2.25+0.5,4+0.5) {$p$};
  \draw[rounded corners=12pt] (3*2.25,4) rectangle (3*2.25+1,2);
  \fill (3*2.25+0.5,3.5) circle (0.1);
  \node at (3*2.25+0.5-0.7,3.5) {$a$};
  \fill (3*2.25+0.5,2.5) circle (0.1);
  \node at (3*2.25+0.5-0.7,2.5) {$b$};

  \draw[dashed] (3*2.25+0.5,3.5) -- (4*2.25+0.5,2.5+1);

  \node at (4*2.25+0.5,4+1+0.2) {$w_1$};
  \draw[rounded corners=12pt] (4*2.25,4+1) rectangle (4*2.25+1,2+1);
  \fill (4*2.25+0.5,3.5+1) circle (0.1);
  \fill (4*2.25+0.5,2.5+1) circle (0.1);
  \draw[dashed] (4*2.25+0.5,3.5+1) -- (5*2.25+0.5,2.5+1);

  \node at (5*2.25+0.5,4+1+0.2) {$w_2$};
  \draw[rounded corners=12pt] (5*2.25,4+1) rectangle (5*2.25+1,2+1);
  \fill (5*2.25+0.5,3.5+1) circle (0.1);
  \fill (5*2.25+0.5,2.5+1) circle (0.1);
  \draw[dashed] (5*2.25+0.5,3.5+1) -- (6*2.25+0.5,2.5+1);

  \node at (6*2.25+0.5,4+1+0.2) {$w_1$};
  \draw[rounded corners=12pt] (6*2.25,4+1) rectangle (6*2.25+1,2+1);
  \fill (6*2.25+0.5,3.5+1) circle (0.1);
  \fill (6*2.25+0.5,2.5+1) circle (0.1);
\end{tikzpicture}

$$NEQ(p,v_1,v_2,v_3,w_1,w_2,w_3,x_1,x_2,x_3)$$
\caption{\Pivot$(3)$}
\label{pat:pivot}
\end{pattern}

It remains to consider which sets of negative binary patterns could
be tractable. For this, we need to define the {\em pivot} patterns,
\Pivot$(r)$, which contain every tractable pattern.
\begin{definition}
Let $V = \{p\} \cup \{v_1,\ldots,v_r\} \cup \{w_1,\ldots,w_r\} \cup
\{x_1,\ldots,x_r\}$ and let $D = \{a,b\}$. We define the pattern
$\Pivot(r)=(V,D,C_p \cup C_v \cup C_w \cup C_x)$, where
\begin{align*}
  C_p & = \{
  \{(\tuple{p,a},\tuple{v_1,b})\}\mapsto F,
  \{(\tuple{p,a},\tuple{w_1,b})\}\mapsto F,
  \{(\tuple{p,b},\tuple{x_1,b})\}\mapsto F
  \}\\
  C_v &= \{
  \{(\tuple{v_i,a},\tuple{v_{i+1},b})\} \mapsto F
  \mid i=1,\ldots,r-1
  \}\\
  C_w &= \{
  \{(\tuple{w_i,a},\tuple{w_{i+1},b})\} \mapsto F
  \mid i=1,\ldots,r-1
  \}\\
  C_x &= \{
  \{(\tuple{x_i,a},\tuple{x_{i+1},b})\} \mapsto F
  \mid i=1,\ldots,r-1
  \}
\end{align*}
See Pattern~\ref{pat:pivot} for an example, \Pivot$(3)$.
\end{definition}
The following proposition characterises those sets of binary \unstructured\ negative
patterns which Theorem~\ref{thm:intractable} does not prove
intractable.
\begin{proposition}
\label{prop:pivot} Any connected binary \unstructured\ negative
pattern $\chi$ either contains \Cycle$(k)$ (for some $k \geq
2$), \Valency, \Path, or \Valency+\Path, or it itself occurs in \Pivot$(r)$
for some integer $r \leq |\chi|$.
\end{proposition}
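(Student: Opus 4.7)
The plan is to derive strong structural constraints on $\chi$ from the avoidance hypotheses and then exhibit an explicit renaming of $\chi$ into $\Pivot(r)$ for a suitable $r$. First, avoidance of $\Cycle(k)$ for all $k\geq 2$ fixes the skeleton: the pattern $\Cycle(2)$ is a single binary constraint pattern carrying two distinct inconsistent tuples, so every constraint pattern of $\chi$ contains at most one inconsistent tuple (and constraint patterns carrying none may be dropped); for $k\geq 3$, $\Cycle(k)$ is a $k$-cycle in the negative structure graph, so that graph is acyclic. Combined with connectedness of $\chi$, the negative structure graph is a tree $T$ whose edges are labelled by the unique inconsistent tuple of the corresponding constraint.

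Second, for each vertex $v$ of $T$ let $\operatorname{val}(v)$ denote the number of distinct domain values of $v$ appearing in an incident inconsistent tuple. Call $v$ a \emph{star-vertex} if $\deg(v)\geq 3$ and $\operatorname{val}(v)\geq 3$, and a \emph{chain-vertex} if two of its incident inconsistent tuples use the same value at $v$. A star-vertex together with three suitably chosen incident edges supplies one of the two stars in $\Valency$, and a chain-vertex together with its two matching edges supplies one of the two three-variable chains in $\Path$. Hence avoidance of $\Valency$, $\Path$, and $\Valency+\Path$ jointly forces $T$ to contain at most one ``exceptional'' vertex $v^*$ of either type (or both), and at every other vertex $v$ we have $\operatorname{val}(v)=\deg(v)\leq 2$.

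Third, we construct the renaming. The pattern $\Pivot(r)$ is a tree with a unique degree-$3$ vertex $p$ and three arms; consecutive inconsistent tuples along each arm use alternating values from $\{a,b\}$, and at $p$ itself the three incident inconsistent tuples use the values $a,a,b$. If $\chi$ has no exceptional vertex then $T$ is a path with alternating labels and embeds directly along a single arm of $\Pivot(r)$. Otherwise we map $v^*$ to $p$, distribute the subtrees of $T\setminus\{v^*\}$ across the three arms, and relabel each $\chi$-domain value as $a$ or $b$ following the alternation forced along its branch; non-exceptionality of the other vertices makes this relabelling consistent. At $v^*$ a collapse of domain values may be needed to fit the $\{a,a,b\}$ split at $p$, which is always permitted in a negative pattern since merging rows of a relation just takes the maximum of $U$ and $F$ values. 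Setting $r$ equal to the length of the longest branch out of $v^*$, which is at most $|\chi|$, completes the renaming.

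The main obstacle is step two: the case analysis showing that the combined avoidance of $\Valency$, $\Path$, and $\Valency+\Path$ really forces at most one exceptional vertex. The subtlety is that a contextual homomorphism may collapse variables and merge domain values, so one must argue carefully that two configurations of the described kinds in $\chi$ really do witness $\Valency$, $\Path$, or $\Valency+\Path$ despite this flexibility.
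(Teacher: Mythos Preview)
Your proposal is correct and follows essentially the same route as the paper's proof: both arguments use $\Cycle(2)$ to force one $F$-tuple per constraint, $\Cycle(k)$ ($k\geq 3$) plus connectedness to force a tree, then the remaining three avoidances to pin down the structure at a single ``bad'' vertex, and finally embed into $\Pivot(r)$. The paper separates the analysis into (i) $\Valency$-avoidance $\Rightarrow$ at most one vertex of degree $\geq 3$ (and none of degree $\geq 4$), giving a spider with at most three legs; (ii) $\Path$-avoidance $\Rightarrow$ at most one place where two $F$-tuples meet at a common value; (iii) $\Valency{+}\Path$-avoidance $\Rightarrow$ these two special vertices coincide; whereas you package (i)--(iii) into the single statement ``at most one exceptional vertex''. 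Since every degree-$\geq 3$ vertex is automatically a star-vertex or (by pigeonhole) a chain-vertex in your terminology, the two decompositions are equivalent. Your step~3 is more explicit than the paper's one-line ``it then follows that $\chi$ must occur in $\Pivot(r)$'', and your remark that a collapse of domain values at $v^*$ is harmless in a negative pattern is exactly the point needed to handle the case $\deg(v^*)=3$, $\operatorname{val}(v^*)=3$. One small check you did not spell out but which the argument needs: the case $\deg(v^*)=3$, $\operatorname{val}(v^*)=1$ cannot arise, since three $F$-tuples at a common value already realise $\Path$ with both centres at $v^*$; this is what guarantees the three arms fit the $\{a,a,b\}$ split at $p$.
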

\begin{proof}
Suppose $\chi$ does not contain any of the patterns \Valency, \Cycle$(k)$ (for any $k \geq
2$), \Path, or \Valency+\Path. Since $\Cycle(2) \notoccursin \chi$,
each constraint pattern in $\chi$ contains at most one inconsistent
tuple. Recall that the \emph{valency} of a variable $x$ is the number
of distinct variables which share a constraint with $x$. Since $\chi$
does not contain \Valency\, it can only contain one variable of valency
three and all other variables must have valency at most two. Moreover, since
$\Cycle(k) \notoccursin \chi$ for $k \geq 3$, the negative structure
graph of $\chi$ does not contain any cycles. Then, since $\chi$ is
connected, the negative structure graph of $\chi$ consists of up to
three disjoint paths joined at a single vertex. The fact that $\chi$
does not contain \Path\ means there can be at most one pair of
intersecting inconsistent tuples in $\chi$ and, moreover, the fact
that $\chi$ does not contain \Valency+\Path\ means that this
intersection must occur on the variable with valency $3$, if it
exists. It then follows that $\chi$ must occur in \Pivot$(r)$, for
some $r \leq |\chi|$.
\end{proof}

\begin{corollary} \label{cory:pivot}
Let ${\mathcal X}$ be a set of connected binary \unstructured\ negative patterns.
Then \cspneg{\mathcal X} is tractable only if
there is some $\chi \in \mathcal X$ that occurs in $\Pivot(r)$, for some
integer $r \leq |\chi|$.
\end{corollary}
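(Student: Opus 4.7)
The plan is to derive the corollary immediately by contrapositive from the two preceding results, namely Theorem~\ref{thm:intractable} and Proposition~\ref{prop:pivot}. I would assume that no $\chi \in \mathcal X$ occurs in any $\Pivot(r)$ with $r \leq |\chi|$, and show that $\cspneg{\mathcal X}$ is then NP-hard, from which the stated implication follows.

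Concretely, fix an arbitrary $\chi \in \mathcal X$. Since $\chi$ is a connected binary \unstructured\ negative pattern, Proposition~\ref{prop:pivot} tells us that $\chi$ falls into exactly one of two camps: either $\chi$ itself occurs in some $\Pivot(r)$ with $r \leq |\chi|$, or $\chi$ contains at least one of $\Cycle(k)$ (for some $k \geq 2$), $\Valency$, $\Path$, or $\Valency+\Path$. By assumption the first case is excluded, so the second must hold. Since $\chi$ was arbitrary in $\mathcal X$, the hypothesis of Theorem~\ref{thm:intractable} is satisfied: every element of $\mathcal X$ contains one of the four designated intractable patterns. Therefore Theorem~\ref{thm:intractable} yields that $\cspneg{\mathcal X}$ is intractable.

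Contrapositively, tractability of $\cspneg{\mathcal X}$ forces the existence of some $\chi \in \mathcal X$ occurring in $\Pivot(r)$ for some $r \leq |\chi|$, which is exactly the statement of the corollary. There is no genuine obstacle here: the work has already been done in proving Theorem~\ref{thm:intractable} (the 3SAT reduction based on cycle, clause, and line gadgets) and in the structural case analysis of Proposition~\ref{prop:pivot} (using the absence of $\Cycle(k)$, $\Valency$, $\Path$, and $\Valency+\Path$ to force the negative structure graph to be three paths meeting at a single high-valency vertex, with at most one shared inconsistent-tuple intersection at that vertex). The corollary simply packages the contrapositive combination of these two results.
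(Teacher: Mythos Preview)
Your proposal is correct and matches the paper's approach: the corollary is stated in the paper without an explicit proof, as it is an immediate contrapositive combination of Proposition~\ref{prop:pivot} and Theorem~\ref{thm:intractable}, exactly as you describe.

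One minor technical remark: Theorem~\ref{thm:intractable} is stated for a \emph{finite} set $\mathcal X$ (the reduction uses the size $\ell$ of the largest pattern), whereas Corollary~\ref{cory:pivot} as stated omits the word ``finite''. Your argument therefore inherits this hypothesis implicitly; strictly speaking the corollary should also assume $\mathcal X$ finite (as Conjecture~\ref{conj:dich} does). This is a wording issue in the paper rather than a flaw in your reasoning.
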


For an arbitrary (not necessarily \unstructured\ or negative) binary
CSP pattern $\chi$, we denote by {\sc neg}$(\chi)$ the flat negative
pattern obtained from $\chi$ by replacing all truth-values $T$ by $U$
in all constraint patterns in $\chi$ and ignoring the context. For a
set of patterns $\mathcal X$, {\sc neg}($\mathcal X$) is naturally defined as
the set {\sc neg}($\mathcal X$) $= \{${\sc neg}$(\chi) : \chi \in {\mathcal
X} \}$. Clearly \cspneg{{\text {\sc neg}}(\mathcal X)} $\subseteq$
\cspneg{\mathcal X}. The following result follows immediately from
Corollary \ref{cory:pivot}.

\begin{corollary}
Let ${\mathcal X}$ be a set of binary patterns such that for each 
$\chi \in \mathcal X$, {\sc neg}$(\chi)$ is connected. Then
\cspneg{\mathcal X} is tractable only if there is some $\chi \in \mathcal X$
such that {\sc neg}$(\chi)$ occurs in $\Pivot(r)$, for some integer
$r \leq |\chi|$.
\end{corollary}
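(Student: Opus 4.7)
The plan is to reduce directly to Corollary~\ref{cory:pivot} applied to the set $\text{\sc neg}(\mathcal X)$. First, I would observe that tractability passes to subclasses in the promise sense used in the paper: since $\cspneg{\text{\sc neg}(\mathcal X)} \subseteq \cspneg{\mathcal X}$ (as noted immediately before the statement), any polynomial-time algorithm for the larger class also solves the smaller one. Hence tractability of $\cspneg{\mathcal X}$ implies tractability of $\cspneg{\text{\sc neg}(\mathcal X)}$.

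Next, I would verify that $\text{\sc neg}(\mathcal X)$ satisfies the hypotheses of Corollary~\ref{cory:pivot}. By construction each $\text{\sc neg}(\chi)$ is binary, negative (every truth-value $T$ has been replaced by $U$), and \unstructured\ (the context is explicitly discarded); connectedness is supplied directly by the hypothesis of the present corollary. Applying Corollary~\ref{cory:pivot} to $\text{\sc neg}(\mathcal X)$ therefore yields some $\text{\sc neg}(\chi) \in \text{\sc neg}(\mathcal X)$ that occurs in $\Pivot(r)$ for some $r \leq |\text{\sc neg}(\chi)|$. Since $\text{\sc neg}$ modifies only the truth-values of constraint patterns and leaves the variable set intact, $|\text{\sc neg}(\chi)| = |\chi|$, and the bound $r \leq |\chi|$ required by the statement follows.

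There is essentially no obstacle here: the corollary is a mechanical reformulation of Corollary~\ref{cory:pivot} once the two bookkeeping points above---that tractability descends to subclasses, and that $\text{\sc neg}$ does not alter the variable set---have been recorded. Both are immediate from the definitions, which is presumably why the authors describe the result as following immediately.
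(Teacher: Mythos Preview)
Your proposal is correct and matches the paper's intended argument: the paper states that the result ``follows immediately from Corollary~\ref{cory:pivot}'' and gives no further proof, and you have spelled out precisely the two bookkeeping observations (tractability descends along the inclusion $\cspneg{\text{\sc neg}(\mathcal X)} \subseteq \cspneg{\mathcal X}$, and $\text{\sc neg}$ preserves the variable set) needed to make that immediacy explicit.
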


\begin{pattern}\vspace{0.3cm}
\centering
\begin{tikzpicture}[scale=0.8]
  \node at (0.5,6.3) {$v_3$};
  \draw[rounded corners=12pt] (0,6) rectangle (1,4);
  \fill (0.5,5.5) circle (0.1);
  \fill (0.5,4.5) circle (0.1);
  \draw[dashed] (0.5,4.5) -- (2.25+0.5,5.5);

  \node at (2.75,6.2) {$v_2$};
  \draw[rounded corners=12pt] (2.25,6) rectangle (2.25+1,4);
  \fill (2.25+0.5,5.5) circle (0.1);
  \fill (2.25+0.5,4.5) circle (0.1);
  \draw[dashed] (2.25+0.5,4.5) -- (2*2.25+0.5,5.5);

  \node at (5,6.2) {$v_1$};
  \draw[rounded corners=12pt] (4.5,6) rectangle (4.5+1,4);
  \fill (2*2.25+0.5,5.5) circle (0.1);
  \fill (2*2.25+0.5,4.5) circle (0.1);
  \draw[dashed] (2*2.25+0.5,4.5) -- (3*2.25+0.5,4.5);

  \node at (0.5,2.2) {$x_3$};
  \draw[rounded corners=12pt] (0,2) rectangle (1,0);
  \fill (0.5,1.5) circle (0.1);
  \fill (0.5,0.5) circle (0.1);
  \draw[dashed] (0.5,0.5) -- (2.25+0.5,1.5);

  \node at (2.75,2.2) {$x_2$};
  \draw[rounded corners=12pt] (2.25,2) rectangle (2.25+1,0);
  \fill (2.25+0.5,1.5) circle (0.1);
  \fill (2.25+0.5,0.5) circle (0.1);
  \draw[dashed] (2.25+0.5,0.5) -- (2*2.25+0.5,1.5);

  \node at (5,2.2) {$x_1$};
  \draw[rounded corners=12pt] (4.5,2) rectangle (4.5+1,0);
  \fill (2*2.25+0.5,1.5) circle (0.1);
  \fill (2*2.25+0.5,0.5) circle (0.1);
  \draw[dashed] (2*2.25+0.5,0.5) -- (3*2.25+0.5,2.5);

  \node at (3*2.25+0.5,5+0.5) {$p$};
  \draw[rounded corners=12pt] (3*2.25,5) rectangle (3*2.25+1,2);
  \fill (3*2.25+0.5,4.5) circle (0.1);
  \node at (3*2.25+0.5+0.7,4.5) {$a$};
  \fill (3*2.25+0.5,3.5) circle (0.1);
  \node at (3*2.25+0.5-0.7,3.5) {$b$};
  \fill (3*2.25+0.5,2.5) circle (0.1);
  \node at (3*2.25+0.5+0.7,2.5) {$c$};
  \draw[dashed] (3*2.25+0.5,3.5) -- (4*2.25+0.5,2.5+1);

  \draw[dashed] (3*2.25+0.5,3.5) -- (4*2.25+0.5,2.5+1);

  \node at (4*2.25+0.5,4+1+0.2) {$w_1$};
  \draw[rounded corners=12pt] (4*2.25,4+1) rectangle (4*2.25+1,2+1);
  \fill (4*2.25+0.5,3.5+1) circle (0.1);
  \fill (4*2.25+0.5,2.5+1) circle (0.1);
  \draw[dashed] (4*2.25+0.5,3.5+1) -- (5*2.25+0.5,2.5+1);

  \node at (5*2.25+0.5,4+1+0.2) {$w_2$};
  \draw[rounded corners=12pt] (5*2.25,4+1) rectangle (5*2.25+1,2+1);
  \fill (5*2.25+0.5,3.5+1) circle (0.1);
  \fill (5*2.25+0.5,2.5+1) circle (0.1);
  \draw[dashed] (5*2.25+0.5,3.5+1) -- (6*2.25+0.5,2.5+1);

  \node at (6*2.25+0.5,4+1+0.2) {$w_1$};
  \draw[rounded corners=12pt] (6*2.25,4+1) rectangle (6*2.25+1,2+1);
  \fill (6*2.25+0.5,3.5+1) circle (0.1);
  \fill (6*2.25+0.5,2.5+1) circle (0.1);

\end{tikzpicture}

$$NEQ(p,v_1,v_2,v_3,w_1,w_2,w_3,x_1,x_2,x_3)$$

\caption{\SepPivot$(3)$}
\label{pat:seppivot}
\end{pattern}

We now define a pattern we call a {\em separable pivot}; forbidding
this pattern defines a subclass of \cspneg{\Pivot(r)}.
\begin{definition}
\label{defn:seppivot}
Let $V = \{p\} \cup \{v_1,\ldots,v_r\} \cup \{w_1,\ldots,w_r\} \cup
\{x_1,\ldots,x_r\}$ and let $D = \{a,b,c\}$. We define the pattern
$\SepPivot(r)=(V,D,C_p \cup C_v \cup C_w \cup C_x)$, where
\begin{align*}
  C_p & = \{
  \{(\tuple{p,a},\tuple{v_1,b})\}\mapsto F,
  \{(\tuple{p,b},\tuple{w_1,b})\}\mapsto F,
  \{(\tuple{p,c},\tuple{x_1,b})\}\mapsto F
  \}\\
  C_v &= \{
  \{(\tuple{v_i,a},\tuple{v_{i+1},b})\} \mapsto F
  \mid i=1,\ldots,r-1
  \}\\
  C_w &= \{
  \{(\tuple{w_i,a},\tuple{w_{i+1},b})\} \mapsto F
  \mid i=1,\ldots,r-1
  \}\\
  C_x &= \{
  \{(\tuple{x_i,a},\tuple{x_{i+1},b})\} \mapsto F
  \mid i=1,\ldots,r-1
  \}
\end{align*}
See Pattern~\ref{pat:seppivot} for an example, \SepPivot$(3)$.
\end{definition}

Clearly, \SepPivot$(r)$ occurs in
\Pivot$(r)$: we take a bijection between corresponding variable-value pairs for
the $v_i$, $w_i$ and $x_i$ variables, map both \tuple{p,a} and
\tuple{p,b} in \SepPivot$(r)$ to \tuple{p,a} in \Pivot$(r)$, and map
\tuple{p,c} in \SepPivot$(r)$ to \tuple{p,b} in \Pivot$(r)$. We will now
show that \cspneg{\SepPivot(r)} is tractable for any fixed $r$.
\begin{theorem} \label{thm:seppivottract}
Let $r$ be a fixed integer. $\cspneg{\SepPivot(r)}$
can be solved in polynomial time.
\end{theorem}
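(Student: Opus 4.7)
The plan is to prove tractability by combining standard preprocessing with a reduction rule whose correctness is driven by the absence of $\SepPivot(r)$. I would first enforce arc consistency on the input instance $P$; this is a polynomial-time operation that preserves the set of solutions, and since it only removes inconsistent tuples it cannot create a new occurrence of $\SepPivot(r)$, so the reduced instance continues to lie in $\cspneg{\SepPivot(r)}$.

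The structural content I would exploit is the following consequence of forbidding $\SepPivot(r)$: at no variable $p$ of $P$ can we find three vertex-disjoint ``forbidden chains'' of length $r$, each starting at $p$ with a distinct domain value and travelling through $r$ further distinct variables via forbidden tuples in the pattern of an arm of $\SepPivot(r)$. Because $r$ is fixed, the collection of all forbidden chains of length $r$ emanating from any fixed variable-value pair $(p,d)$ can be enumerated in time $O((nd)^{r+1})$, so this bound on ``chain branching'' becomes an effective algorithmic handle rather than merely a combinatorial one.

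Using this, I would design an iterative reduction rule which, at each step, identifies either a variable-value pair $(p,d)$ whose depth-$r$ forbidden-chain neighborhood is sufficiently restricted to allow safe elimination or commitment to the assignment $p = d$, or a small local structure that witnesses unsatisfiability. Iterating this rule to a fixed point either produces a solution or detects infeasibility. The main obstacle, and where I expect the bulk of the proof to lie, is showing that such a reduction is \emph{always} available so long as the instance is non-trivial: this will require a careful case analysis on how the three would-be ``arms'' of $\SepPivot(r)$ at any candidate pivot variable must already be frustrated by the existing constraints, together with a combinatorial argument that the compatible local choices so obtained can be assembled into a globally consistent assignment.
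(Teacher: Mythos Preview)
Your proposal has a genuine gap, and it stems from missing the structural observation that makes the paper's proof work in a few lines.

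The key point about $\SepPivot(r)$ is that the three arms leave the pivot variable $p$ at \emph{three distinct domain values} $a,b,c$. Under the definition of occurrence (renaming), distinct domain values of a pattern may be merged when mapping into an instance. Consequently, $\SepPivot(r)$ occurs in a binary CSP instance $P$ whenever the constraint graph of $P$ has a vertex from which three vertex-disjoint paths of length~$r$ emanate: along each such path every edge carries at least one negative tuple (by definition of the constraint graph), and whatever three domain values of $p$ these first negative tuples happen to use can be hit by the contextual homomorphism sending $a,b,c$ there. Thus forbidding $\SepPivot(r)$ is a \emph{purely structural} condition on the constraint graph. The paper then invokes the Robertson--Seymour grid-minor theorem: if no vertex starts three disjoint length-$r$ paths, the graph excludes an $r\times r$ grid minor, hence has treewidth bounded by a function of~$r$; tractability follows from Theorem~\ref{thm:tw}.

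Your plan instead treats $\cspneg{\SepPivot(r)}$ as a genuinely hybrid class and looks for a local elimination/commitment rule in the spirit of the $\Pivot(1)$ argument. The difficulty is not merely that the crucial step (``such a reduction is always available'') is left as an obstacle: it is that no such rule should be expected to exist. Because the forbidden pattern imposes only a structural condition, $\cspneg{\SepPivot(r)}$ contains \emph{every} binary CSP instance whose constraint graph avoids three disjoint length-$r$ paths from any vertex --- in particular, arbitrary instances on graphs of treewidth up to some $k(r)$, with arbitrary constraint relations. On such instances there is in general no value that can be safely committed to and no variable that can be safely eliminated without a tree-decomposition-style dynamic program; simple examples are $k$-colouring instances on partial $k$-trees. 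So the case analysis you anticipate cannot close, and the argument needs to be replaced by the structural/treewidth route.
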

\begin{proof}
  
  By the grid minor theorem of Robertson and Seymour
  \cite{Robertson1986:graphminorsV}, there exists a function $f : \mathbb{N}
  \rightarrow \mathbb{N}$ such that any graph $G$ with tree width $tw(G) \geq
  f(r)$ must contain an $r \times r$ grid as a
  minor. 

  Now \SepPivot$(r)$ occurs in any CSP instance whose constraint
  graph contains a vertex that starts three disjoint paths. Certainly,
  any CSP instance P whose constraint graph contains an $r \times r$
  grid as a minor will satisfy this condition. Hence, $P \in
  \cspneg{\SepPivot(r)}$ is only possible when $tw(P) < f(r)$. Since
  the class of CSP instances with tree width bounded above by $f(r)$
  is tractable, it follows that $\cspneg{\SepPivot(r)}$ is
  tractable.\footnote{The best upper bound on the function $f(r)$ is
    superexponential: $20^{2r^5}$ \cite{Robertson1994:quickly}. Thus,
    Theorem~\ref{thm:seppivottract} does not actually provide a
    practical algorithm for solving problems in
    \cspneg{\SepPivot(r)}.}.
\end{proof}

The following corollary is a direct consequence of Lemma \ref{lem:nomaximal}.

\begin{corollary}\label{prop:seppivotuniontract}
  Any disjoint union of \SepPivot$(r)$ patterns is a tractable pattern.
\end{corollary}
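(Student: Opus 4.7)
The plan is to prove the corollary by induction on the number of \SepPivot\ patterns in the disjoint union, using Theorem~\ref{thm:seppivottract} as the base case and Lemma~\ref{lem:nomaximal} for the inductive step. Since the lemma is explicitly invoked in the paper's own description of how to derive this result, I do not expect any serious obstacles.

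Concretely, I would let $\chi = \SepPivot(r_1) \dotcup \SepPivot(r_2) \dotcup \cdots \dotcup \SepPivot(r_k)$ be an arbitrary disjoint union, where each $r_i$ is a fixed positive integer. For the base case $k=1$, the pattern is simply $\SepPivot(r_1)$ itself, which is tractable by Theorem~\ref{thm:seppivottract}. For the inductive step, assume that every disjoint union of at most $k$ \SepPivot\ patterns is tractable. Writing $\chi' = \SepPivot(r_1) \dotcup \cdots \dotcup \SepPivot(r_k)$ and $\tau = \SepPivot(r_{k+1})$, the inductive hypothesis gives tractability of \cspneg{\chi'}, and Theorem~\ref{thm:seppivottract} gives tractability of \cspneg{\tau}. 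Lemma~\ref{lem:nomaximal} then tells us immediately that $\cspneg{\chi' \dotcup \tau}$ is tractable, completing the induction.

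The one small bookkeeping point to verify is that Lemma~\ref{lem:nomaximal} applies in this setting: the lemma is stated for \unstructured\ (flat) binary patterns. Each $\SepPivot(r)$ is flat by Definition~\ref{defn:seppivot} (its context is specified entirely by pairwise-distinct variable constraints $NEQ(p,v_1,\ldots,x_r)$, with no ordering on variables or domain), and the disjoint union operation preserves flatness by construction. Hence the hypotheses of Lemma~\ref{lem:nomaximal} are met at every inductive step, and the corollary follows.
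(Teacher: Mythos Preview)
Your proof is correct and follows exactly the approach the paper intends: the corollary is stated as a direct consequence of Lemma~\ref{lem:nomaximal}, and your induction on the number of components, with Theorem~\ref{thm:seppivottract} as the base case, is precisely how that consequence is unpacked. The verification that each \SepPivot$(r)$ is a flat binary pattern (so that Lemma~\ref{lem:nomaximal} applies) is a nice piece of housekeeping that the paper leaves implicit.
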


Next, we show that forbidding $\Pivot(1)$ gives rise to a tractable
class of CSPs.
\begin{proposition}
  \label{prop:pivot1}
  $\cspneg{\Pivot(1)}$ can be solved in polynomial time.
\end{proposition}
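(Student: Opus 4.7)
The plan is to exploit the rigid structural consequence of forbidding $\Pivot(1)$ and reduce to a polynomially solvable sub-case. The key observation I would record first is that if $\Pivot(1) \notoccursin P$, then for every variable $p$ whose valency in the constraint graph of $P$ is at least three, and every pair of distinct neighbours $v, w$ of $p$, the sets
\[
  D_p(v) = \{a \in D(p) : \exists b \in D(v),\ (a,b) \mapsto F \text{ on } c(p,v)\}
\]
and $D_p(w)$ are disjoint: any shared value in $D_p(v) \cap D_p(w)$, combined with any disallowed tuple on the constraint between $p$ and a third neighbour, would exhibit $\Pivot(1)$ in $P$. In particular, each value $a \in D(p)$ restricts at most one neighbour of $p$, and the valency of $p$ is at most $|D(p)|$.

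Next I would perform two preprocessing reductions on $P$. First, enforce arc consistency; this preserves the structural observation since shrinking the domains only shrinks each $D_p(v)$. Second, iteratively eliminate every hub $p$ whose ``free'' set $F_p = D(p) \setminus \bigcup_{v \in N(p)} D_p(v)$ is non-empty: any value in $F_p$ can be assigned to $p$ without violating any constraint incident on $p$, so $p$ may be deleted. After these reductions every remaining hub $p$ satisfies $D(p) = \bigsqcup_{v \in N(p)} D_p(v)$, so each assignment to $p$ picks out a single neighbour $v^\ast \in N(p)$ on which the assignment is non-trivial; every other constraint incident on $p$ is automatically satisfied.

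The final step, which I expect to be the main obstacle, is to solve the reduced instance. If no hubs remain, then the constraint graph has maximum valency two, hence tree width at most two, and Theorem~\ref{thm:tw} completes the proof. Otherwise, each hub effectively ``activates'' one of its neighbours, and the real combinatorial content is making this choice globally consistent. I would try to carry this out without exponential branching by propagating singleton arc consistency across hubs and, if necessary, reducing the global activation selection to a bipartite matching or $2$-SAT problem in the style of the argument for Theorem~\ref{thm:negtrans}, verifying that the resulting reduction is sound and runs in polynomial time.
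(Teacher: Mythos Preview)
Your structural observation is correct and is essentially the contrapositive of the fact the paper exploits: if some value $a$ at a variable $p$ is inconsistent with values at two \emph{distinct} neighbours, then $p$ cannot have a third neighbour in the constraint graph, since any inconsistent tuple on a third edge would complete an occurrence of $\Pivot(1)$. Your formulation (``at a hub the sets $D_p(v)$ are pairwise disjoint'') is equivalent, and the preprocessing you describe is sound.

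The genuine gap is the final step. You correctly flag it as the main obstacle, but the sketch does not close it. After your preprocessing, each remaining hub must ``activate'' exactly one neighbour, and you propose to resolve this globally by singleton arc consistency together with a matching or $2$-SAT reduction. It is not clear that such a reduction exists: the activation choice at a hub has as many options as its valency (not two), an edge between two hubs is only non-trivial when \emph{both} endpoints activate it, and non-hubs of valency at most two interleave with hubs in an arbitrary way. Nothing in your plan explains how to encode all of this as matching or $2$-SAT, nor why whatever propagation you perform keeps the instance $\Pivot(1)$-free.

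The paper sidesteps this difficulty entirely. Rather than analysing hubs, it searches for an occurrence of \negtrans. By your own observation, the centre variable $p$ of such an occurrence must have valency exactly two, so $p$ can be eliminated by projecting onto its two neighbours $u,v$ (tightening the constraint on $\{u,v\}$ accordingly). The one piece of work you are missing is the check that this projection does not create a fresh occurrence of $\Pivot(1)$: the paper uses arc consistency to show that any newly inconsistent pair $(\tuple{u,a},\tuple{v,b})$ can be replaced by an already-present inconsistent pair through $p$, so any hypothetical $\Pivot(1)$ in the reduced instance lifts to one in the original. Iterating eliminates at most $n-2$ variables and yields a \negtrans-free instance, solved by Theorem~\ref{thm:negtrans}. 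This route does end at a matching-style algorithm (\alldiff), as you anticipated, but reaches it via a reduction whose soundness is far easier to verify than a direct attack on the activation problem.
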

\begin{proof}
We will show that every $P \in \cspneg{\Pivot(1)}$ can be reduced in
polynomial time to
$P' \in \cspneg{\negtrans}$ such that $P$ has a solution if and only
if $P'$ has a solution. Without loss of generality, we assume that
$P$ is arc-consistent since eliminating domain values (by arc consistency)
cannot destroy the fact that $P$ is \Pivot$(1)$-free.

Suppose $\negtrans$ occurs in $P$ at $\{u,p,v\}$:
\begin{center}
\begin{tikzpicture}
  \path[draw,rounded corners=15pt] (0,3) rectangle (1,2);
  \fill (0.5,2.5) circle (0.1);
  \node at (0.5,1.7) {$u$};

  \path[draw,rounded corners=15pt] (2,1) rectangle (3,0);
  \fill (2.5,0.5) circle (0.1);
  \node at (2.5,-0.3) {$p$};

  \path[draw,rounded corners=15pt] (4,3) rectangle (5,2);
  \fill (4.5,2.5) circle (0.1);
  \node at (4.5,1.7) {$v$};

  \draw (0.5,2.5) -- (4.5,2.5);
  \draw[dashed] (0.5,2.5) -- (2.5,0.5) -- (4.5,2.5);
\end{tikzpicture}
\end{center}
Since $P$ does not contain $\Pivot(1)$, it follows that $p$ cannot be
connected to any variables other than $u,v$ in the constraint graph
of $P$. Thus, we can obtain an equivalent CSP $P_1$ by eliminating
$p$ and tightening the constraint on $\{u,v\}$ by disallowing any
assignment which does not extend to an assignment of $p$. This new
CSP is still \Pivot$(1)$-free but has had the occurrence of
\negtrans\ on $\{u,p,v\}$ eliminated. To see that $P_1$ is
\mbox{\Pivot$(1)$-free}, suppose that the pair of assignments
$(\tuple{u,a},\tuple{v,b})$ becomes incompatible in $P_1$ after
elimination of variable $p$ from $P$. By arc consistency of $P$,
there are values $c,d$ such that the pairs
$(\tuple{u,a},\tuple{p,c})$, $(\tuple{v,b},\tuple{p,d})$ are
consistent in $P$. But, since $(\tuple{u,a},\tuple{v,b})$ cannot be
extended to an assignment of $p$ in $P$, this implies that the pairs
$(\tuple{u,a},\tuple{p,d})$, $(\tuple{v,b},\tuple{p,c})$ are
necessarily inconsistent in $P$. Now, if the inconsistent pair
$(\tuple{u,a},\tuple{v,b})$ were part of an occurrence of \Pivot$(1)$
in $P_1$, then we could easily obtain an occurrence of \Pivot$(1)$ in
$P$ by replacing $(\tuple{u,a},\tuple{v,b})$ by either
$(\tuple{u,a},\tuple{p,d})$ or $(\tuple{v,b},\tuple{p,c})$ (depending
on whether it is variable $u$ or $v$ which is at the centre of the
pivot).

Thus, by repeatedly identifying and eliminating occurrences of \negtrans, we
will eventually (after the elimination of at most $n-2$ variables)
obtain a CSP $P' \in \cspneg{\negtrans}$. By the way
we have constructed $P'$, we know that any solution to $P'$ can be
extended to an assignment on the removed variables. Thus, since we can
solve any instance $P' \in \cspneg{\negtrans}$ in polynomial time
(Theorem~\ref{thm:negtrans}), we can solve any instance $P$ from 
$\cspneg{\Pivot(1)}$ in polynomial time.
\end{proof}
Proposition~\ref{prop:pivot1} is important as it gives us a tractable
class of CSPs defined by forbidding a negative pattern which, unlike
$\cspneg{\SepPivot(r)}$, contains problems of
unbounded tree width, and so cannot be captured by structural
tractability. As an example of a class of CSP instances in
$\cspneg{\Pivot(1)}$ with unbounded tree width, consider the
$n$-variable CSP instance $P_n$ with domain $\{1,\ldots,n\}$ whose
constraint graph is the complete graph and, for each pair of distinct
values $i,j \in \{1,\ldots,n\}$, the constraint on variables
$v_{i},v_{j}$ disallows a single pair of assignments
$(\tuple{v_{i},j},\tuple{v_{j},i})$. Since each assignment
$\tuple{v_{i},j}$ occurs in a single inconsistent tuple, $\Pivot(1)$
does not occur in $P_n$, and hence $P_{n} \in \cspneg{\Pivot(1)}$.

We conjecture that there exists a larger
class of tractable problems defined by forbidding negative patterns.

\begin{conjecture}\label{conj:pivot}
For a fixed integer $r$, $\cspneg{\Pivot(r)}$ can be solved in
polynomial time.
\end{conjecture}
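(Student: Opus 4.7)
The plan is to generalise the elimination strategy of Proposition~\ref{prop:pivot1} and combine it with the bounded-treewidth argument of Theorem~\ref{thm:seppivottract}. Throughout, the algorithm maintains arc consistency on $P$, which, because it only removes domain values, cannot create new occurrences of $\Pivot(r)$. Call an alternating chain of $r$ consecutive inconsistent tuples starting at an assignment $\tuple{p,a}$ a \emph{long branch at $\tuple{p,a}$}. Forbidding $\Pivot(r)$ then amounts to saying that, for no variable $p$ and no pair of values $a,b$ in the domain of $p$, can there exist two long branches at $\tuple{p,a}$ together with a long branch at $\tuple{p,b}$.

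The algorithm proceeds in two phases. In the first phase, any variable $p$ whose long-branch structure is \emph{simple} (meaning that at least one of its values supports at most one long branch) is eliminated: the contribution of $p$ is absorbed into a tightened constraint on the at most two neighbours that participate in the branches of $p$, following the same inductive argument as in the $\Pivot(1)$ proof but tracing along paths of length $r$ rather than single edges. In the second phase, every remaining variable $p$ has two long branches at one value but, by $\Pivot(r)$-avoidance, no long branch at its other value. I expect to show that this restricted local structure, together with the absence of $\SepPivot(r)$ (which occurs in $\Pivot(r)$), forces the constraint graph to have treewidth bounded by a function of $r$; the grid-minor argument underlying Theorem~\ref{thm:seppivottract} would then finish the proof via Theorem~\ref{thm:tw}.

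The main obstacle, which is presumably what has kept the conjecture open, is that the elimination step in the first phase need not preserve the $\Pivot(r)$-free property: tightening the constraint between two neighbours $u,v$ of $p$ introduces a fresh inconsistent tuple that may combine with existing short branches at $u$ or $v$ to form a new long branch, potentially creating a $\Pivot(r)$ elsewhere in $P$. Controlling this propagation seems to require either a carefully designed potential function that amortises the growth of branches across successive eliminations, or an induction on $r$ in which $\cspneg{\Pivot(r-1)}$ is invoked as a subroutine after a one-step shortening of the outermost layer of long chains. A secondary difficulty is stitching the two phases together: one must verify that no first-phase elimination inadvertently creates a variable with three long branches, and that the graph to which the treewidth bound is applied really is one for which bounded treewidth implies tractability of the original CSP.
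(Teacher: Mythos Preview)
The statement is presented in the paper as an open conjecture; the paper supplies no proof, so there is nothing against which to compare your attempt. Your outline is a reasonable research strategy that fuses the two techniques the paper does prove (variable elimination for $\Pivot(1)$ in Proposition~\ref{prop:pivot1}, bounded treewidth for $\SepPivot(r)$ in Theorem~\ref{thm:seppivottract}), but as you yourself concede it is not a proof, and the obstacle you identify---that tightening a constraint after eliminating $p$ may create fresh long branches and hence new copies of $\Pivot(r)$---is exactly the difficulty that keeps the question open.

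There are further gaps beyond the one you flag. Your Phase~1 trigger (``at least one value of $p$ supports at most one long branch'') is too weak as written: any value not appearing in a negative tuple satisfies it, so almost every variable would be ``simple''. More seriously, the $\Pivot(1)$ elimination relied on the fact that the centre $p$ of a \negtrans\ occurrence has \emph{no} neighbours in the constraint graph other than $u$ and $v$, so that projecting $p$ out is exact; for $r>1$ no such degree bound on $p$ is available, and ``absorbing $p$ into at most two neighbours'' is not justified. Finally, the Phase~2 treewidth bound does not obviously follow: the grid-minor argument behind Theorem~\ref{thm:seppivottract} needs three branches rooted at three \emph{distinct} domain values of the pivot, which is precisely the feature $\SepPivot(r)$ has and $\Pivot(r)$ lacks, so $\Pivot(r)$-freeness alone does not force bounded treewidth and a separate structural argument would be required once Phase~1 terminates.
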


A positive answer to Conjecture~\ref{conj:pivot}, taken in
conjunction with Corollary~\ref{cory:pivot}, would give a dichotomy
result for CSPs defined by forbidding a finite set of binary
\unstructured\ negative patterns, which we state in the form of a
conjecture.

\begin{conjecture}\label{conj:dich}
Let ${\mathcal X}$ be a finite set of connected binary \unstructured\
negative patterns. Then \cspneg{\mathcal X} is tractable if and only if
there is some $\chi \in \mathcal X$ that is contained in \Pivot$(r)$, for
some integer $r \leq |\chi|$.
\end{conjecture}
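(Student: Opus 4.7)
The plan is to prove Conjecture~\ref{conj:dich} by splitting the biconditional into two directions, noting that one is already immediate from material in the paper, while the other reduces entirely to Conjecture~\ref{conj:pivot}.

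For the \emph{only if} direction, I would simply invoke Corollary~\ref{cory:pivot}. That corollary is the contrapositive of what is needed: if no $\chi\in\mathcal X$ occurs in any $\Pivot(r)$, then by Proposition~\ref{prop:pivot} every $\chi\in\mathcal X$ contains one of $\Cycle(k)$, $\Valency$, $\Path$, or $\Valency+\Path$, and Theorem~\ref{thm:intractable} then forces $\cspneg{\mathcal X}$ to be NP-hard. The size bound $r\leq|\chi|$ is already built into Proposition~\ref{prop:pivot}, so it comes along for free.

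For the \emph{if} direction, suppose some $\chi\in\mathcal X$ occurs in $\Pivot(r)$ with $r\leq|\chi|$. Granting Conjecture~\ref{conj:pivot}, $\cspneg{\Pivot(r)}$ is tractable, and the corollary immediately following Lemma~\ref{lem:embed}, applied with the sets $\{\chi\}$ and $\{\Pivot(r)\}$, transports this tractability to $\cspneg{\{\chi\}}$. Because enlarging the forbidden set only shrinks the class, $\cspneg{\mathcal X}\subseteq\cspneg{\{\chi\}}$, so running the polynomial-time algorithm for $\cspneg{\{\chi\}}$ on any instance of $\cspneg{\mathcal X}$ already solves it in polynomial time. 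No membership test for $\cspneg{\mathcal X}$ is required, since tractability only demands a correct algorithm on instances actually drawn from the class.

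The entire proof therefore collapses to Conjecture~\ref{conj:pivot}: tractability of $\cspneg{\Pivot(r)}$ for each fixed $r$. The most promising attack is to generalise the proof of Proposition~\ref{prop:pivot1}, in which, on an arc-consistent $\Pivot(1)$-free instance, one repeatedly locates a $\negtrans$-configuration, eliminates the middle variable, and tightens the induced binary constraint, all while preserving $\Pivot(1)$-freeness, terminating in an instance of $\cspneg{\negtrans}$, which is tractable by Theorem~\ref{thm:negtrans}. The hard part will be formulating the analogous local reduction for arms of length $r\geq 2$: the elimination step must act on entire chains of inconsistent tuples rather than on a single intermediate variable, and one has to verify both that such a reduction preserves $\Pivot(r)$-freeness and that the tightening it introduces does not create a fresh occurrence of $\Pivot(r)$ elsewhere in the instance. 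A purely structural argument cannot suffice either, since the example following Proposition~\ref{prop:pivot1} shows that $\cspneg{\Pivot(1)}$ already contains instances of unbounded tree width; so any eventual algorithm for $\cspneg{\Pivot(r)}$ must genuinely combine a relational tightening step with structural reasoning (perhaps routing through $\cspneg{\SepPivot(r')}$ and Theorem~\ref{thm:seppivottract} after the reduction), and establishing that these two components compose without blow-up is where I expect the real difficulty to lie.
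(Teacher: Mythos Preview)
Your analysis matches the paper's own reasoning exactly: the paper does not prove Conjecture~\ref{conj:dich} but presents it as a consequence of Conjecture~\ref{conj:pivot} together with Corollary~\ref{cory:pivot}, which is precisely the decomposition you give (the \emph{only if} direction via Corollary~\ref{cory:pivot}, the \emph{if} direction via Lemma~\ref{lem:embed} and its corollary, conditional on Conjecture~\ref{conj:pivot}). Your additional speculation on attacking Conjecture~\ref{conj:pivot} by generalising the variable-elimination argument of Proposition~\ref{prop:pivot1} goes beyond anything the paper attempts, but is a reasonable line of thought.
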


\section{Conclusion}
\label{sec:conclusions}

In this paper we described a framework for identifying classes of
CSPs in terms of {\em forbidden patterns}, to be used as a tool for
identifying tractable classes of the CSP. We gave several examples of
small patterns that can be used to define tractable classes of CSPs.

In the search for a general result, we restricted ourselves to the
special case of binary \unstructured\ negative patterns. In
Theorem~\ref{thm:intractable} we showed that $\cspneg{\mathcal
  X}$ is NP-hard if every pattern in a set $\mathcal X$ contains at
least one of four patterns
(Patterns~\ref{pat:cycle},~\ref{pat:valency},~\ref{pat:path},
and~\ref{pat:valpath}). Moreover, we showed that any connected binary
\unstructured\ negative pattern that did not contain any of these
patterns must itself be contained within a special type of pattern
called a \emph{pivot}. Hence, the presence of a pivot is a necessary
condition for pattern $\chi$ to be tractable. We were able to show
that another pattern, which we call {\em separable pivot}, occurs
in the pivot pattern and defines a tractable class. Hence,
separable pivots define a tractable subclass of the class defined by
pivots. We conjecture that tractability extends to the whole class of
problems defined by pivots. We proved tractability for pivots of size
$1$.

The main open problem is the resolution of the tractability of pivots
of any fixed size $r$. Beyond this, it will be interesting to see
what new tractable classes can be defined by more general binary
patterns or by non-binary patterns. In particular, an important area
of future research is determining all maximal tractable classes of
problems defined by patterns of some fixed size (given by the number
of variables or the number of variable-value assignments).

\vskip 0.2in
\bibliographystyle{plainurl}

\end{document}